\newtheorem{problem}{Problem}
\newcommand{\bbD}{\mathbb{D}}
\newcommand{\bbN}{\mathbb{N}}
\newcommand{\bbR}{\mathbb{R}}
\newcommand{\bbS}{\mathbb{S}}
\newcommand{\calC}{\mathcal{C}}
\newcommand{\calF}{\mathcal{F}}
\newcommand{\calI}{\mathcal{I}}
\newcommand{\calL}{\mathcal{L}}
\newcommand{\calP}{\mathcal{P}}
\DeclareMathOperator{\diag}{diag}
\DeclareMathOperator{\CNN}{CNN}
\DeclareMathOperator{\Pool}{Pool}
\DeclareMathOperator{\blkdiag}{blkdiag}
\definecolor{mycolor1}{rgb}{0.00000,0.44700,0.74100}%
\definecolor{mycolor2}{rgb}{0.85000,0.32500,0.09800}%
\definecolor{mycolor3}{rgb}{0.92900,0.69400,0.12500}%
\definecolor{mycolor4}{rgb}{0.49400,0.18400,0.55600}%
\tikzstyle{block} = [rectangle, draw, fill=blue!20, 
\tikzstyle{block2} = [rectangle, draw, fill=red!20, 
\tikzstyle{block3} = [rectangle, draw, fill=yellow!20, 
\tikzstyle{block4} = [text width=25em, text centered, minimum height=2em]
\tikzstyle{line} = [draw, -latex']
\title[Lipschitz constant estimation for 1D convolutional neural networks]{Lipschitz constant estimation for 1D convolutional neural networks}
\author{%
\Name{Patricia Pauli} \Email{patricia.pauli@ist.uni-stuttgart.de}\\
 \addr University of Stuttgart, Institute for Systems Theory and Automatic Control, 70569 Stuttgart, Germany%
 \AND
\Name{Dennis Gramlich} \Email{dennis.gramlich@ic.rwth-aachen.de}\\
 \addr RWTH Aachen, Chair of Intelligent Control Systems, 52074 Aachen, Germany%
\AND
\Name{Frank Allgöwer} \Email{frank.allgower@ist.uni-stuttgart.de}\\
 \addr University of Stuttgart, Institute for Systems Theory and Automatic Control, 70569 Stuttgart, Germany%
}
\begin{document}

\maketitle

\begin{abstract}%
    In this work, we propose a dissipativity-based method for Lipschitz constant estimation of 1D convolutional neural networks (CNNs). In particular, we analyze the dissipativity properties of convolutional, pooling, and fully connected layers making use of incremental quadratic constraints for nonlinear activation functions and pooling operations. The Lipschitz constant of the concatenation of these mappings is then estimated by solving a semidefinite program which we derive from dissipativity theory. To make our method as efficient as possible, we exploit the structure of convolutional layers by realizing these finite impulse response filters as causal dynamical systems in state space and carrying out the dissipativity analysis for the state space realizations. The examples we provide show that our Lipschitz bounds are advantageous in terms of accuracy and scalability.
\end{abstract}

\begin{keywords}%
  Convolutional neural networks, robustness, dissipativity, incremental quadratic constraints%
\end{keywords}

\section{Introduction}\label{sec:intro}
Convolutional neural networks (CNNs) achieve excellent results in practical applications, wherein convolutional layers detect meaningful features in small sections of an input signal while necessitating significantly fewer parameters than fully connected layers. CNNs have hence become the state of the art in many machine learning applications. While 2D CNNs are prevalently used in image and video processing, applications of 1D CNNs include classification of medical data, health monitoring, fault detection in electrical machines, and audio processing \citep{kiranyaz20211d,oord2016wavenet}. Safety-critical applications especially require neural networks (NNs) to perform robustly and reliably, for which the Lipschitz constant has become a generally accepted robustness measure \citep{szegedy2013intriguing}. Thus, recently, efforts have been made to finding accurate upper bounds on the Lipschitz constant of NNs. For example, \citet{latorre2020lipschitz} formulated a polynomial optimization problem for Lipschitz constant estimation and in particular, they study Lipschitz bounds of general CNNs with respect to the $\ell_\infty$ norm. \citet{combettes2020lipschitz} proposed bounds viewing activation functions as averaged operators and \citet{fazlyab2019efficient} derived a semidefinite program (SDP) based on incremental quadratic constraints (QCs) to over-approximate the nonlinear activation functions. However, both latter methods have limited scalability to NNs of practically relevant scale. One reason for this limitation in \citep{fazlyab2019efficient} is the use of a sparse linear matrix inequality (LMI). \citet{newton2021exploiting,xue2022chordal} suggested to exploit the underlying structure of this LMI, i.\,e., its chordal sparsity pattern, 
to break down the corresponding LMI into multiple smaller LMIs. Similarly, we consider properties of individual layers, also yielding a set of smaller layer-wise LMIs instead of one large and sparse one, which is advantageous in terms of computational tractability of the underyling SDP. 

Previous SDP-based approaches \citep{fazlyab2019efficient,xue2022chordal} are formulated for fully connected NNs, yet they can be applied to CNNs by transforming the convolutional layers to fully connected layers. This corresponds to studying the Toeplitz matrices of the convolutional layers \citep{pauli2022neural,aquino2022robustness}, that, unfortunately, are highly redundant, i.\,e., they have a high degree of sparsity and repeated entries, which causes a significant computational overhead. Our approach of viewing convolutional layers as dynamical systems reduces this redundancy by exploiting the structure of CNNs in SDP-based Lipschitz constant estimation, which thus leads to better scalability. This observation that convolutional layers are dynamical systems is non-trivial to exploit, since other components of CNNs, such as pooling layers, are not linear time-invariant systems. We handle this heterogeneous feed-forward interconnection of systems (layers) by carrying out a dissipativity analysis for each layer type separately. Subsequently, we derive the Lipschitz bound of the input-output mapping based on the dissipativity properties of each individual layer and the underlying feed-forward interconnection of the CNN, like it is oftentimes done in multi-agent control \citep{arcak2016networks} or has been suggested for the robustness analysis of NNs \citep{aquino2022robustness}. Other than previous works, our approach considers pooling layers in the Lipschitz constant estimation, by deriving incremental QCs for them. In this work, we focus on 1D CNNs to lay out our novel dissipativity-based concept for the derivation of an SDP for Lipschitz constant estimation. However, using a compact 2D systems respresenation of 2D CNNs \citep{gramlich2022convolutional}, the framework can be extended to the more popular class of 2D CNNs. Note further that in the same way \citep{fazlyab2019efficient} can be applied to CNNs, our approach can be applied to 2D CNNs, necessitating to represent 2D convolutions as 1D convolutions.

Our main contributions can be summarized as follows. We provide a first result on how to exploit CNN structures in SDP-based Lipschitz constant estimation. To this end, we introduce a dissipativity-based method to derive layer-wise LMIs. Both our compact description of convolutions and the layer-wise LMIs improve the computational tractability over previous approaches. In Section \ref{sec:problem_statement}, we formally state the problem and in Section \ref{sec:dissipativity}, we outline the dissipativity-based approach for Lipschitz constant estimation for 1D CNNs. In doing so, we introduce the description of convolutional layers in state space, derive incremental QCs for the nonlinear activation functions and pooling layers, and subsequently, we establish an SDP to determine an upper bound on the Lipschitz constant for a given 1D CNN. Finally, in Section~\ref{sec:sim} we compare our approach to \cite{fazlyab2019efficient} in terms of computation time and accuracy and, in Section \ref{sec:conclusion}, we conclude the paper.

\textbf{Notation}: By $\bbS^n$ ($\bbS_+^n$), we denote the set of $n$-dimensional symmetric (positive definite) matrices. By $\bbD^n$ ($\bbD_+^n$), we denote the set of $n$-dimensional (positive definite) diagonal matrices, and by $\bbN_+$ the natural numbers without zero. $\calI$ is a set of indices with elements $i\in\bbN_+$, and $\vert\calI\vert$ gives the number of elements in the index set $\calI$. The subscripts $a$, $b$ in $\nu_a$ and $\nu_b$ label two instances of $\nu$.

\section{Problem statement}\label{sec:problem_statement}
We consider a 1D CNN as shown in Fig.~\ref{fig:1D_CNN}, consisting of convolutional layers $\calC_i: \bbR^{c_{i-1} \times N_{i-1}} \to \bbR^{c_i \times N_i}$ with indices $i\in\calI_C$, pooling layers $\calP_i: \bbR^{c_{i-1} \times N_{i-1}} \to \bbR^{c_i \times N_i}$ with indices $i\in\calI_P$, and fully connected layers $\calL_i: \bbR^{n_{i-1}} \to \bbR^{n_{i}}$ with indices $i\in\calI_F$, adding up to a total number of $l=\vert\calI_C\vert+\vert\calI_P\vert+\vert\calI_F\vert$ layers. 
Here, $c_{i-1}$ and $c_i$ denote the input and output channel sizes, $N_{i-1}$ and $N_i$ the input and output dimensions in pooling and covolutional layers, and $n_{i-1}$ and $n_{i}$ the input and output dimensions in fully connected layers, respectively. We study the CNN as a concatenation of the individual layers
\begin{align}\label{eq:CNN}
    \mathrm{CNN}_\theta = \calL_{l} \circ \ldots \circ \calL_{p+1} \circ \calF \circ \calP_{p} \circ \calC_{p - 1} \circ \ldots \circ \calP_{2} \circ \calC_1,
\end{align}
wherein the only formal restriction is the separation into two parts: (i) the part containing fully connected layers $\calI_F=\{p+1,\dots,l\}$ 
and (ii) the part consisting of convolutional and pooling layers $\calI_C\cup\calI_P=\{1,\dots,p\}$ 
where $p$ is the index of the last layer of part~(ii), which can either be a convolutional or a pooling layer. Further, the transition between the two parts requires a flattening operation $\calF:\bbR^{c_p\times N_p}\to\bbR^{n_p}$ of the output of the $p$-th layer, where $n_p=c_pN_p$. 
\begin{figure}
    \centering
    \includegraphics[width=\textwidth]{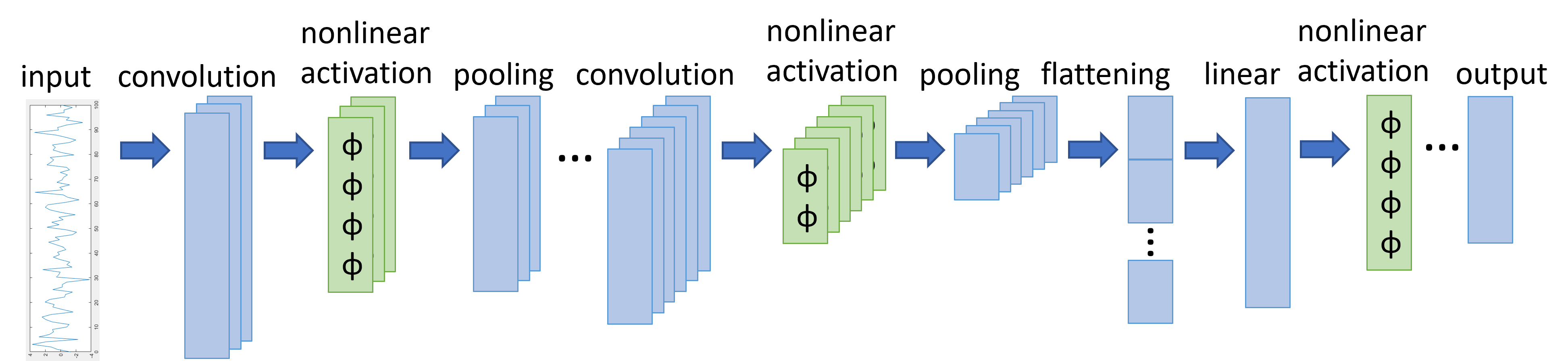}
    \caption{1D CNN structure of \eqref{eq:CNN} including convolutions, pooling and linear layers as well as nonlinear activation functions and a flattening layer.}
    \label{fig:1D_CNN}
\end{figure}

To enable an efficient description of the layers constituting the CNN \eqref{eq:CNN}, we denote the signals between those layers by $w^i$, i.\,e., $w^i = \calC_i (w^{i-1})$, $w^i = \calL_i (w^{i-1})$, or $w^i = \calP_i (w^{i-1})$ depending on the type of the $i$-th layer. Note that a signal $w^i$ can either be an element of $\bbR^{c_i \times N_i}$ in the case of the convolutional part (ii) of the network, or an element of $\bbR^{n_i}$ in the case of the fully connected part~(i) of the network. With this notation, we define each layer ($\calC_i$/$\calP_i$/$\calL_i$) through the way it acts on its input $w^{i-1}$. A \emph{convolutional} layer
\begin{equation}\label{eq:CNN_layer}
    \calC_i: w_k^{i} = \phi_i\left(b_i + \sum_{j=0}^{\ell_i-1} K_j^i w_{k-j}^{i-1}\right), \quad k = 0,\ldots,N_i-1,~\forall i\in\calI_C,
\end{equation}
with convolution kernel $K^i_j\in\bbR^{c_{i} \times c_{i-1}}$, $j=0,\dots,\ell_i-1$, kernel size $\ell_i$, and bias $b_i\in\bbR^{c_{i}}$, as the first instance, performs a convolution on $w^{i-1}\in\bbR^{c_{i-1}\times N_{i-1}}$ and subsequently, applies an element-wise nonlinear activation function $\phi_i: \bbR^{c_i} \to \bbR^{c_i}$ to obtain the output $w^{i}\in\bbR^{c_{i}\times N_{i}}$, wherein we denote all entries along the channel dimension at propagation/time step $k$ by $w_k^{i}\in\bbR^{c_i}$. 
Note that in \eqref{eq:CNN_layer} we define $w_{k-j}^{i-1}$ as zero whenever $k-j \leq 0$, i.\,e., we apply zero padding.

In order to downsample a signal $w^{i-1}$ convolutional layers are potentially followed by \emph{pooling} layers. To this end, we consider average pooling layers and maximum pooling layers
\begin{align}\label{eq:pooling_layer}
    \calP_i^\mathrm{av} : w_k^{i} = \frac{1}{\ell_i}\sum_{j = 1}^{\ell_i} w^{i-1}_{\ell_i (k - 1) + j}, \quad
    \calP_i^\mathrm{max} : w_k^i =\max_{j = 1,\ldots, \ell_i} w^{i-1}_{\ell_i (k - 1) + j} ,\quad
    \begin{split}
        k = 0,\ldots,N_i-1,\\
        \forall i\in\calI_P,
    \end{split}
\end{align}
where $\ell_i$ is the size of the pooling kernel, which both are standard components in CNNs. Throughout this paper, we only consider the case that stride and kernel size are both $\ell_i$, such that the output dimension is $N_i=N_{i-1}/\ell_i$, which requires that $\ell_i$ divides $N_{i-1}$. However, the approach can be extended to the general case of stride and pooling kernel size being different.

Finally, a CNN typically holds \emph{fully connected} layers, which we define as mappings
\begin{equation}\label{eq:FC_layer}
    \calL_i: w^i=\phi_i(W_{i}w^{i-1}+b_{i})\quad \forall i\in\calI_F\backslash \{l\},\quad \calL_{l}: w^{l} = W_{l}w^{l-1}+b_{l}
\end{equation}
with weights $W_i\in\bbR^{n_{i}\times n_{i-1}}$, biases $b_i\in\bbR^{n_{i}}$ and activation functions $\phi_i: \bbR^{n_i} \to \bbR^{n_i}$. 

The CNN $f_\theta(w^0)=w^{l}$ is thus characterized by $\theta=\{\{K^i,b_i\}_{i\in\calI_C},\{W_i,b_i\}_{i\in\calI_F}\}$ and the chosen pooling layers and nonlinear activation functions. The aim of this paper is to find a Lipschitz certificate $\gamma$ for 1D CNNs.

\begin{problem}
    \label{problem1}
    For a given 1D convolutional neural network $\CNN_\theta$ with fixed parameter $\theta$, find an upper bound on the Lipschitz constant, i.\,e., find a value $\gamma > 0$ such that
    \begin{align}\label{eq:Lipschitz}
        \| \CNN_\theta (w_a^0) - \CNN_\theta (w_b^0) \|_2 \leq \gamma \| w_a^0 - w_b^0 \|_2\qquad \forall w_a^0,w_b^0\in\bbR^{N_0}.
    \end{align}
\end{problem}

\section{Dissipativity-based Lipschitz constant estimation for 1D CNNs}\label{sec:dissipativity}
To solve Problem \ref{problem1}, we establish dissipativity properties for the individual layers, i.\,e., we find 
$Q_i$, $S_i$, $R_i$ that satisfy the incremental quadratic constraints 
\begin{align}\label{eq:iQC}
    s(\Delta w^{i-1},\Delta w^{i})\coloneqq
    \begin{bmatrix}
        \underline{w}_a^i - \underline{w}_b^i\\
        \underline{w}_a^{i-1} - \underline{w}_b^{i-1}
    \end{bmatrix}^\top
    \begin{bmatrix}
        Q_i & S_i\\
        S_i^\top & R_i
    \end{bmatrix}
    \begin{bmatrix}
        \underline{w}_a^i - \underline{w}_b^i\\
        \underline{w}_a^{i-1} - \underline{w}_b^{i-1}
    \end{bmatrix} \geq 0~~~\forall \underline{w}^{i-1}_a,\underline{w}^{i-1}_b\in\bbR^{n_{i-1}}
\end{align}
for each $\calC_i$-, $\calL_i$-, and $\calP_i$-layer, where $\underline{w}^{i-1}$ denotes a column-wise stacked vector of dimension $n_i=c_i N_i$ if it comes from $\bbR^{c_i\times N_i}$ and corresponds to $w^{i-1}$ if it comes from $\bbR^{n_i}$, and where $\Delta w^i = w_a^i-w_b^i$. To further estimate the Lipschitz constant of the input-output mapping of the 1D CNN \eqref{eq:CNN}, we connect the different layers through their dissipativity properties, considering the CNN's cascaded interconnection structure. To account for this structure, we introduce the block-tridiagonal coupling matrix \citep{kanellopoulos2019decentralized}
\begin{equation}\label{eq:diss_coupling}
    H\coloneqq\begin{bmatrix}
        \gamma^2 I-R_1  & -S_1^\top       & 0 & \hdots & 0\\
        -S_1  & -Q_1-R_2  & -S_2^\top &\ddots  & \vdots\\
        0          & -S_2 & -Q_2-R_3   & \ddots & 0\\
        \vdots          & \ddots         & \ddots     & \ddots & -S_{l}^\top\\
        0 & \hdots & 0 & -S_{l} & -Q_{l}-I
    \end{bmatrix}\succeq 0,
\end{equation}
where the layer indices $\{1,\dots,l\}=\calI_C\cup\calI_P\cup\calI_F$ count through all $\vert\calI_C\vert$  convolutional layers, $\vert\calI_P\vert$ pooling layers, and $\vert\calI_F\vert$ fully connected layers. 
 
\begin{theorem}\label{thm:main}
    Let $\CNN_\theta$ and $\gamma > 0$ be given. If there exist
    $Q_i\in\bbS^{n_i}$, $S_i\in\bbR^{n_{i}\times n_{i-1}}$, $R_i\in\bbS^{n_{i-1}}$ such that \eqref{eq:iQC} is satisfied for all $i=1,\dots,l$ layers and if in addition \eqref{eq:diss_coupling} holds, then the $\CNN_\theta$ is $\gamma$-Lipschitz continuous.
\end{theorem}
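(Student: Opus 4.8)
The plan is to propagate both inputs through the network in parallel and to exhibit the quantity $\gamma^2\|w_a^0-w_b^0\|_2^2-\|\CNN_\theta(w_a^0)-\CNN_\theta(w_b^0)\|_2^2$ as a sum of manifestly nonnegative terms, in the spirit of a cascaded dissipativity argument. First, fix $w_a^0,w_b^0$ and let $w_a^i,w_b^i$, $i=0,\dots,l$, denote the resulting layer signals, with $\underline{w}_a^i,\underline{w}_b^i$ their column-wise stacked (flattened) versions; since stacking columns and the flattening map $\calF$ are Euclidean isometries, $\|\Delta\underline{w}^0\|_2=\|\Delta w^0\|_2$ and $\|\Delta\underline{w}^l\|_2=\|\CNN_\theta(w_a^0)-\CNN_\theta(w_b^0)\|_2$. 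Because each incremental quadratic constraint \eqref{eq:iQC} is assumed to hold for all admissible inputs, it holds in particular along this pair of trajectories, so $s(\Delta w^{i-1},\Delta w^i)\ge 0$ for every $i=1,\dots,l$.

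Second, collect the increments into the single vector $v=\big[(\Delta\underline{w}^0)^\top,(\Delta\underline{w}^1)^\top,\dots,(\Delta\underline{w}^l)^\top\big]^\top$ and expand $v^\top H v$ block by block using the block-tridiagonal structure of $H$ in \eqref{eq:diss_coupling}. The identity to establish is
\[
 v^\top H v \;=\; \gamma^2\|\Delta\underline{w}^0\|_2^2\;-\;\|\Delta\underline{w}^l\|_2^2\;-\;\sum_{i=1}^{l} s(\Delta w^{i-1},\Delta w^i),
\]
which follows by matching terms: summing the supply rates $s(\Delta w^{i-1},\Delta w^i)$ over $i$ places the $R_i$-contribution on block $i-1$, the $Q_i$-contribution on block $i$, and the $S_i$-contributions on the $(i-1,i)$ off-diagonals — precisely the nonzero blocks of $-H$, except that $H$ carries the extra $\gamma^2 I$ in its $(0,0)$ block and the extra $-I$ in its $(l,l)$ block. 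Rearranging yields
\[
 \gamma^2\|\Delta\underline{w}^0\|_2^2-\|\Delta\underline{w}^l\|_2^2 \;=\; v^\top H v \;+\; \sum_{i=1}^{l} s(\Delta w^{i-1},\Delta w^i).
\]

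Third, $H\succeq 0$ gives $v^\top H v\ge 0$ and the constraints \eqref{eq:iQC} give $s(\Delta w^{i-1},\Delta w^i)\ge 0$; hence the right-hand side is nonnegative, so $\|\Delta\underline{w}^l\|_2^2\le\gamma^2\|\Delta\underline{w}^0\|_2^2$, and translating back through the isometries gives \eqref{eq:Lipschitz}. The only genuine obstacle is the bookkeeping in the block-wise expansion of $v^\top H v$: one must track the index shift between the $R_i$-block of layer $i$ (which multiplies $\Delta\underline{w}^{i-1}$) and the $Q_i$-block of layer $i$ (which multiplies $\Delta\underline{w}^i$), and verify that the two off-diagonal blocks $-S_i^\top$ and $-S_i$ combine into $-2(\Delta\underline{w}^i)^\top S_i\,\Delta\underline{w}^{i-1}$, matching the cross term of $-s(\Delta w^{i-1},\Delta w^i)$. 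Once that identity is in place, the result is immediate from the two nonnegativity hypotheses.
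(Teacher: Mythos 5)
Your proposal is correct and follows essentially the same route as the paper's own proof: multiplying the coupling matrix $H$ from \eqref{eq:diss_coupling} on both sides by the stacked increment vector, identifying the block-tridiagonal expansion with $\gamma^2\|\Delta\underline{w}^0\|_2^2-\|\Delta\underline{w}^l\|_2^2-\sum_{i=1}^{l}s(\Delta w^{i-1},\Delta w^i)$, and concluding from $H\succeq 0$ together with the nonnegativity of each supply rate. Your explicit remarks on the isometry of the flattening/stacking operations and on how the two off-diagonal blocks $-S_i,-S_i^\top$ produce the cross term $-2(\Delta\underline{w}^i)^\top S_i\Delta\underline{w}^{i-1}$ are correct bookkeeping that the paper leaves implicit.
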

\begin{proof}
    We left and right multiply \eqref{eq:diss_coupling} by
    $\begin{bmatrix}
        \Delta {\underline{w}^0}^\top & \Delta {\underline{w}^1}^\top & \dots & \Delta {\underline{w}^{l}}^\top
    \end{bmatrix}$ and its transpose, respectively, and obtain
\begin{align*}
    \gamma^2 {\Delta \underline{w}^0}^\top \Delta \underline{w}^0 \underbrace{ - s(\Delta w^0,\Delta w^1)}_{\leq 0} - \dots \underbrace{- s(\Delta w^{l-1},\Delta w^{l})}_{\leq 0} -{\Delta \underline{w}^{l}}^\top \Delta \underline{w}^{l} \geq 0,
\end{align*}
The supply functions $s(\Delta w^{i-1},\Delta w^i)$ are nonnegative by \eqref{eq:iQC} which implies $\gamma^2\Vert \Delta{w^0}\Vert_2^2 -\Vert \Delta w^{l}\Vert_2^2\geq 0$, i.\,e., $\gamma$-Lipschitz continuity of $\CNN_\theta$.
\end{proof}

In the following Sections \ref{sec:iQCs_CNN} to \ref{sec:iQCs_pooling}, we discuss incremental QCs for the different layer types and in Section \ref{sec:SDP}, we establish an SDP that renders Theorem \ref{thm:main} computational, providing an accurate upper bound on the Lipschitz constant of a given 1D CNN.

\subsection{Incremental quadratic constraints for convolutional layers}\label{sec:iQCs_CNN}
Previous approaches to analyze the Lipschitz constant of CNNs using LMIs and incremental QCs rely on rewriting convolutional layers as sparse fully connected layers using Toeplitz matrices \citep{pauli2022neural,aquino2022robustness}. In the present section, we introduce a compact and non-sparse state space representation for convolutional layers which significantly reduces the size of the convolutional layer description, yielding a scalable SDP-based approach for Lipschitz constant estimation. 

A discrete-time state space representation of the finite impulse response (FIR) filter \eqref{eq:CNN_layer} is, for example, given by
\begin{align*}
    x_{k+1}^i &= A_i x_k^i + B_i w^{i-1}_k,\quad y_k^i = C_i x^i_k + D_i w^{i-1}_k + b_i, \quad w_k^i=\phi(y_k^i) \quad \forall i\in\calI_C
\end{align*}
with state $x_k\in\bbR^{n_{x_i}}$ and state dimension $n_{x_i}=(\ell_i-1)c_{i-1}$. Here, the matrices $A_i$, $B_i$, $C_i$, $D_i$ are
\begin{equation*}
    A_i = \begin{bmatrix}
        0 & I\\
        0 & 0
    \end{bmatrix},~
    B_i = \begin{bmatrix}
        0 \\
        I 
    \end{bmatrix},~
    C_i =
    \begin{bmatrix}
        K^i_{\ell_i-1} & \dots & K^i_1
    \end{bmatrix},~
    D_i = K^i_0
\end{equation*}
or similarity transformations $(EAE^{-1},EB,CE^{-1},D)$ thereof for some invertible $E\in\bbR^{n_{x_i}\times n_{x_i}}$. Note that the formulation of a state space representation of \eqref{eq:CNN_layer} requires causality. In standard CNN literature, 1D convolutions are given by an acausal FIR filter $y_k^i = b_i + \sum_{j=-\tilde{\ell}_i}^{\tilde{\ell}_i} K_j^i w_{k-j}^{i-1}$. A change of indices, however, yields \eqref{eq:CNN_layer}. One clear advantage of representing the convolutional layer compactly in state space rather than by a Toeplitz matrix is that it is independent of the signal's input dimension $N_{i-1}$ and scales with the channel sizes $c_{i-1}$ and $c_{i}$ and the kernel size $\ell_i$ of the convolution filter instead. This means that the computational expense of analyzing a convolutional layer is the same for inputs of arbitrary input dimension $N_{i-1}$.

In addition to the convolution operation, a convolutional layer \eqref{eq:CNN_layer} consists of a nonlinear activation function. Commonly used activation functions, such as ReLU, $\tanh$ and sigmoid, are slope-restricted which can be captured by an incremental QC as follows.
\begin{lemma}[\cite{fazlyab2019efficient,pauli2021training}]\label{lem:slope}
    Let $\varphi:\bbR\to\bbR$ be slope-restricted on $[0,1]$. Then, for any $\Lambda_i\in\bbD_+^{c_i}$ the vector-valued function $\phi(\nu^i)=
    \begin{bmatrix}
        \varphi(\nu_1^i) & \dots & \varphi(\nu_{c_i}^i)    
    \end{bmatrix}^\top:\bbR^{c_i}\to\bbR^{c_i}$ satisfies
    \begin{equation}\label{eq:slope_restriction}
        \begin{bmatrix}
            \phi(\nu^i_a)-\phi(\nu^i_b)\\
            \nu^i_a-\nu^i_b
        \end{bmatrix}^\top
        \begin{bmatrix}
            -2\Lambda_i & \Lambda_i\\
            \Lambda_i & 0 
        \end{bmatrix}
        \begin{bmatrix}
            \phi(\nu^i_a)-\phi(\nu^i_b)\\
            \nu^i_a-\nu^i_b
        \end{bmatrix}\geq 0\quad\forall~\nu^i_a,\nu^i_b\in\bbR^{c_i}.
    \end{equation}
\end{lemma}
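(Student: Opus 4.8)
The plan is to expand the quadratic form on the left-hand side of \eqref{eq:slope_restriction}, exploit the diagonal structure of $\Lambda_i$ to decouple it into a sum of scalar contributions (one per channel), and then reduce each scalar term to the defining property of slope restriction.

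First, I would abbreviate $u \coloneqq \phi(\nu_a^i)-\phi(\nu_b^i)$ and $v \coloneqq \nu_a^i-\nu_b^i$. Carrying out the block multiplication and using that $\Lambda_i$ is symmetric, the left-hand side of \eqref{eq:slope_restriction} equals $-2u^\top\Lambda_i u + 2u^\top\Lambda_i v = 2u^\top\Lambda_i(v-u)$. Since $\Lambda_i=\diag(\lambda_1,\dots,\lambda_{c_i})$ with $\lambda_k>0$, and since $\phi$ acts element-wise so that $u_k=\varphi((\nu_a^i)_k)-\varphi((\nu_b^i)_k)$ and $v_k=(\nu_a^i)_k-(\nu_b^i)_k$, this rewrites as $2\sum_{k=1}^{c_i}\lambda_k\, u_k(v_k-u_k)$.

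Second, I would argue that each summand $u_k(v_k-u_k)$ is nonnegative, which is precisely the scalar slope restriction of $\varphi$ on $[0,1]$. If $(\nu_a^i)_k=(\nu_b^i)_k$, then $u_k=v_k=0$ and the term vanishes. Otherwise, set $t_k \coloneqq u_k/v_k$; slope-restrictedness of $\varphi$ on $[0,1]$ means exactly $t_k\in[0,1]$, whence $u_k(v_k-u_k)=t_k(1-t_k)\,v_k^2\ge 0$. Multiplying by $\lambda_k>0$ and summing over $k$ gives that the quadratic form is $\ge 0$, which is the claim; since $\nu_a^i,\nu_b^i$ were arbitrary, this holds for all of $\bbR^{c_i}$.

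I do not anticipate a real obstacle: the argument is a direct algebraic expansion followed by a coordinatewise application of the hypothesis. The only points deserving a line of care are handling the case $(\nu_a^i)_k=(\nu_b^i)_k$ separately so the quotient $t_k$ is well defined, and stating explicitly that "slope-restricted on $[0,1]$" is the hypothesis that yields $t_k\in[0,1]$ (equivalently $0\le u_k(v_k-u_k)$) — the positivity of the diagonal entries of $\Lambda_i$ then does the rest.
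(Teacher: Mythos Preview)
Your proof is correct. The paper itself does not include a proof of this lemma; it is stated with attribution to \cite{fazlyab2019efficient,pauli2021training} and then used directly. Your argument --- expanding the block quadratic form to $2u^\top\Lambda_i(v-u)$, decoupling via the diagonal structure of $\Lambda_i$, and reducing each scalar term to the slope-restriction inequality $0\le u_k(v_k-u_k)$ --- is the standard derivation found in those references.
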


By virtue of Lemma \ref{lem:slope}, we replace constraints $w_a^i - w_b^i = \phi (\nu_a^i) - \phi (\nu_b^i)$ with the inequality \eqref{eq:slope_restriction} which relaxes the nonlinear constraint. This kind of approach is common practice in robust control \citep{scherer2000linear}, where control engineers routinely interpret nonlinearities, such as slope-restricted activation functions, as uncertainties and use QCs to perform a convex relaxation of the constraints imposed by these nonlinearities to render a problem computationally tractable.

Using these QCs and the state space realization of the convolution, we introduce the concept of layer-wise incremental dissipativity for an individual convolutional layer \eqref{eq:CNN_layer}. We denote the incremental difference of the unrolled signal by $\Delta w^i =w^i_a-w^i_b$ and the incremental difference at propagation step $k$ by $\Delta_w^i =w^i_{a,k}-w^i_{b,k}$.
\begin{definition}[Incremental dissipativity of convolutional layers]
    We call the $i$-th convolutional \linebreak layer $\calC_i$ incrementally $(Q,S,R)$-dissipative if any input-output pair $\{\Delta w^{i-1},\Delta w^{i}\}$ stemming from \eqref{eq:CNN_layer} satisfies
    \begin{equation}\label{eq:diss_CNN}
        s(\Delta w^{i-1},\Delta w^{i}) =  
        \sum_{k=0}^{N_i-1} {\Delta_w^{i}}^\top \widetilde{Q}_i \Delta_w^{i} + {2\Delta_w^{i}}^\top \widetilde{S}_i \Delta_w^{i-1} + {\Delta_w^{i-1}}^\top \widetilde{R}_i \Delta_w^{i-1} \geq 0 \quad \forall~N_i\in\bbN_+
    \end{equation}    
    with some matrices $\widetilde{Q}_i\in\bbS^{c_i}$,  $\widetilde{S}_i\in\bbR^{c_{i}\times c_{i-1}}$, $\widetilde{R}_i\in\bbS^{c_{i-1}}$.
\end{definition}
Such incremental dissipativity certificates can capture, e.g., Lipschitz continuity ($\widetilde{Q}_i=-I$, $\widetilde{R}_i=\gamma^2 I$, $\widetilde{S}_i=0$), slope restriction ($\widetilde{Q}_i=0$, $\widetilde{R}_i=2\Lambda_i$, $\widetilde{S}_i=-\Lambda_i$) or other behavioral properties such as incremental passivity ($\widetilde{Q}_i=\widetilde{R}_i=0$, $\widetilde{S}_i=I$) \citep{hill1976stability}. Our goal is to find some $\widetilde{Q}_i$, $\widetilde{S}_i$, and $\widetilde{R}_i$ such that the $i$-th convolutional layer satisfies \eqref{eq:diss_CNN}. The following lemma provides us with a sufficient condition for dissipativity of convolutional layers $\calC_i$.
\begin{lemma}\label{lem:diss_CNN}
    If for some matrices $\widetilde{Q}_i\in\bbS^{c_i}$,
    $\widetilde{S}_i\in\bbR^{c_{i}\times c_{i-1}}$, $\widetilde{R}_i\in\bbS^{c_{i-1}}$ there exist $P_i\in\bbS_+^{n_{x_i}}$, $\Lambda_i\in \bbD^{c_i}_+$ such that
    \begin{equation}\label{eq:cert_CNN}
        \begin{bmatrix}
            P_i-A_i^\top P_iA_i  & -A_i^\top P_iB_i & -C_i^\top\Lambda_i\\
            -B_i^\top P_iA_i &  \widetilde{R}_i-B_i^\top P_iB_i  & \widetilde{S}_i^\top-D_i^\top\Lambda_i\\
            -\Lambda_i C_i & \widetilde{S}_i-\Lambda_i D_i &2\Lambda_i+\widetilde{Q}_i
        \end{bmatrix}\succeq 0
    \end{equation}
    holds, then the $i$-th convolutional layer $\calC_i$ is $(Q,S,R)$-dissipative.
\end{lemma}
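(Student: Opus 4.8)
The plan is to read the LMI \eqref{eq:cert_CNN} as a one-step incremental dissipation inequality for the FIR state-space realization, with quadratic storage function $V_i(\xi)=\xi^\top P_i\xi$ and the multiplier $\Lambda_i$ absorbing the slope-restricted activation, and then to sum this inequality along a trajectory of length $N_i$.

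First I would fix two inputs $w_a^{i-1},w_b^{i-1}\in\bbR^{c_{i-1}\times N_{i-1}}$, propagate them through $x_{k+1}^i=A_ix_k^i+B_iw_k^{i-1}$, $y_k^i=C_ix_k^i+D_iw_k^{i-1}+b_i$, $w_k^i=\phi(y_k^i)$, and introduce the incremental signals $\Delta x_k=x_{a,k}^i-x_{b,k}^i$, $\Delta u_k=w_{a,k}^{i-1}-w_{b,k}^{i-1}$, $\Delta y_k=y_{a,k}^i-y_{b,k}^i$, and $\Delta v_k=w_{a,k}^i-w_{b,k}^i=\phi(y_{a,k}^i)-\phi(y_{b,k}^i)$. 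Because the bias cancels in the differences, these satisfy $\Delta x_{k+1}=A_i\Delta x_k+B_i\Delta u_k$ and $\Delta y_k=C_i\Delta x_k+D_i\Delta u_k$, and the zero padding in \eqref{eq:CNN_layer} forces $x_{a,0}^i=x_{b,0}^i=0$, i.e. $\Delta x_0=0$.

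Next I would multiply \eqref{eq:cert_CNN} on the left by $\begin{bmatrix}\Delta x_k^\top & \Delta u_k^\top & \Delta v_k^\top\end{bmatrix}$ and on the right by its transpose, which gives a scalar inequality $\ge 0$. Expanding it, the three blocks carrying $P_i$ collapse, via the state recursion, into $\Delta x_k^\top P_i\Delta x_k-\Delta x_{k+1}^\top P_i\Delta x_{k+1}$; the blocks carrying $\Lambda_i$ combine, via the output equation, into $-\bigl(2\Delta v_k^\top\Lambda_i\Delta y_k-2\Delta v_k^\top\Lambda_i\Delta v_k\bigr)$; and what remains is exactly the per-step supply $\sigma_k:=\Delta v_k^\top\widetilde Q_i\Delta v_k+2\Delta v_k^\top\widetilde S_i\Delta u_k+\Delta u_k^\top\widetilde R_i\Delta u_k$. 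Invoking Lemma \ref{lem:slope} with $\nu_a^i=y_{a,k}^i$ and $\nu_b^i=y_{b,k}^i$ shows the $\Lambda_i$-group is $\ge 0$, so the inequality rearranges to $\sigma_k\ge\Delta x_{k+1}^\top P_i\Delta x_{k+1}-\Delta x_k^\top P_i\Delta x_k$. Summing over $k=0,\dots,N_i-1$ telescopes the right-hand side to $\Delta x_{N_i}^\top P_i\Delta x_{N_i}-\Delta x_0^\top P_i\Delta x_0\ge 0$, using $\Delta x_0=0$ and $P_i\succeq 0$; hence $\sum_{k=0}^{N_i-1}\sigma_k\ge 0$, which is precisely \eqref{eq:diss_CNN}, and since the computation is valid for any trajectory length it holds for all $N_i\in\bbN_+$, so $\calC_i$ is $(Q,S,R)$-dissipative. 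The argument is the standard discrete-time KYP/dissipativity computation and is insensitive to the similarity transform $E$ (one simply replaces $P_i$ by $E^{-\top}P_iE^{-1}$); the only place that genuinely needs care is the bookkeeping that sorts the expanded quadratic form into (i) a telescoping storage difference, (ii) the negative of the slope QC \eqref{eq:slope_restriction}, and (iii) the supply $\sigma_k$, together with the observation that zero padding annihilates the boundary term $\Delta x_0^\top P_i\Delta x_0$. I do not expect a serious obstacle beyond this bookkeeping.
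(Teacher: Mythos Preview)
Your proposal is correct and follows essentially the same argument as the paper's proof in Appendix~1: left/right multiply the LMI \eqref{eq:cert_CNN} by the incremental signal vector, split the resulting scalar inequality into a telescoping storage difference, the (negative of the) slope-restriction QC from Lemma~\ref{lem:slope}, and the per-step supply $\sigma_k$, then sum over $k$ and use $P_i\succ 0$ together with the zero-padding initial condition $\Delta x_0=0$ to discard the boundary terms. The only cosmetic point is that what you call ``the $\Lambda_i$-group'' in the expanded inequality is actually $\le 0$ (the expression inside your parentheses is $\ge 0$ by Lemma~\ref{lem:slope}, and the group carries the outer minus sign); your subsequent rearrangement to $\sigma_k\ge \Delta x_{k+1}^\top P_i\Delta x_{k+1}-\Delta x_k^\top P_i\Delta x_k$ is nonetheless correct.
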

The proof follows along the lines of typical arguments used in dissipativity analysis \citep{goodwin2014adaptive} and robust control \citep{scherer2000linear} and is for completeness included in Appendix 1.

A convolutional layer that satisfies \eqref{eq:diss_CNN} also satisfies \eqref{eq:iQC},
choosing the parameterization
\begin{align*}
    Q_i = \blkdiag(\widetilde{Q}_i,\dots,\widetilde{Q}_i),~S_i =\blkdiag(\widetilde{S}_i,\dots,\widetilde{S}_i),~
    R_i =\blkdiag(\widetilde{R}_i,\dots,\widetilde{R}_i),
\end{align*}
where the matrices $\widetilde{Q}_i$, $\widetilde{S}_i$, $\widetilde{R}_i$ appear $N_i$ times as entries of block-diagonal matrices, which means that \eqref{eq:cert_CNN} ensures the incremental QC \eqref{eq:iQC} for convolutional layers $\calC_i$. 

\subsection{Incremental quadratic constraints for fully connected layers}\label{sec:iQCs_FC}
Similarly, we next consider fully connected layers \eqref{eq:FC_layer} of the CNN individually and address incremental dissipativity for these layers.

\begin{definition}[Incremental dissipativity of fully connected layers]
    We call the $i$-th fully connected layer $\calL_i$ incrementally $(Q,S,R)$-dissipative if any input-output pair $\{\Delta w^{i-1}, \Delta w^{i}\}$ stemming from \eqref{eq:FC_layer} satisfies
    \begin{equation}\label{eq:diss_FC}
        s(\Delta w^{i-1},\Delta w^{i}) =  
        {\Delta w^{i}}^\top Q_i \Delta w^{i} +
        2{\Delta w^{i}}^\top S_i \Delta w^{i-1} + {\Delta w^{i-1}}^\top R_i \Delta w^{i-1}\geq 0
    \end{equation}    
    with some matrices $Q_i\in\bbS^{n_i}$, $S_i\in\bbR^{n_{i}\times n_{i-1}}$,  $R_i\in\bbS^{n_{i-1}}$.
\end{definition}
\begin{lemma}\label{lem:diss_FC}
    If for some matrices $Q_i\in\bbS^{n_i}$, $S_i\in\bbR^{n_{i}\times n_{i-1}}$, $R_i\in\bbS^{n_{i-1}}$ there exist $\Lambda_i\in\bbD_+^{n_i}$ such that
    \begin{equation}\label{eq:cert_FC}
        \begin{bmatrix}
            R_{i} & S_i^\top-W_{i}^\top \Lambda_{i}\\
            S_i-\Lambda_{i} W_{i} & 2\Lambda_{i}+Q_{i}
        \end{bmatrix}\succeq 0
    \end{equation}
    holds, then the $i$-th fully connected layer $\calL_i$ is $(Q,S,R)$-dissipative.
\end{lemma}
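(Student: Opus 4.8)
The plan is to mirror the argument behind Lemma~\ref{lem:diss_CNN} for convolutional layers, but since a fully connected layer carries no internal state the dissipation inequality collapses directly onto the static LMI \eqref{eq:cert_FC}. Write $\nu^i = W_i w^{i-1} + b_i$ for the pre-activation of $\calL_i$, so that $w^i = \phi_i(\nu^i)$; for two inputs $w_a^{i-1}, w_b^{i-1}$ the increments satisfy $\Delta\nu^i = W_i\,\Delta w^{i-1}$ (the bias cancels) and $\Delta w^i = \phi_i(\nu_a^i) - \phi_i(\nu_b^i)$.

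First I would apply Lemma~\ref{lem:slope} with the multiplier $\Lambda_i\in\bbD_+^{n_i}$ furnished by the hypothesis: since $\phi_i$ is entrywise slope-restricted on $[0,1]$,
\begin{equation*}
    \begin{bmatrix} \Delta w^i \\ \Delta\nu^i \end{bmatrix}^\top
    \begin{bmatrix} -2\Lambda_i & \Lambda_i \\ \Lambda_i & 0 \end{bmatrix}
    \begin{bmatrix} \Delta w^i \\ \Delta\nu^i \end{bmatrix} \ge 0,
\end{equation*}
and substituting $\Delta\nu^i = W_i\,\Delta w^{i-1}$ turns this into $-2\,{\Delta w^i}^\top \Lambda_i \Delta w^i + 2\,{\Delta w^i}^\top \Lambda_i W_i\,\Delta w^{i-1} \ge 0$.

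Next I would left- and right-multiply the assumed LMI \eqref{eq:cert_FC} by $\begin{bmatrix} {\Delta w^{i-1}}^\top & {\Delta w^i}^\top \end{bmatrix}$ and its transpose, obtaining
\begin{equation*}
    {\Delta w^{i-1}}^\top R_i \Delta w^{i-1}
    + 2\,{\Delta w^i}^\top (S_i - \Lambda_i W_i)\Delta w^{i-1}
    + {\Delta w^i}^\top(2\Lambda_i + Q_i)\Delta w^i \ge 0.
\end{equation*}
Regrouping the left-hand side, it equals $s(\Delta w^{i-1},\Delta w^i)$ as defined in \eqref{eq:diss_FC} plus the quantity $2\,{\Delta w^i}^\top\Lambda_i\Delta w^i - 2\,{\Delta w^i}^\top\Lambda_i W_i\,\Delta w^{i-1}$, which is exactly the negative of the left-hand side of the slope inequality obtained in the previous step and is therefore $\le 0$. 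Hence $s(\Delta w^{i-1},\Delta w^i)\ge 0$, i.e.\ $\calL_i$ is $(Q,S,R)$-dissipative. For the output layer $\calL_l$ the same computation applies verbatim with $\phi_l$ the identity, for which the slope QC of Lemma~\ref{lem:slope} holds with equality, so no separate argument is needed.

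There is no genuine obstacle here; the only thing to get right is the sign bookkeeping in the regrouping step — recognizing that expanding the LMI produces precisely the slope constraint with the opposite sign, so that the dissipativity inequality survives once the nonnegative slope term has been absorbed.
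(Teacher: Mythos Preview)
Your proof is correct and follows essentially the same approach as the paper: the paper simply remarks that Lemma~\ref{lem:diss_FC} is the special case of the proof of Lemma~\ref{lem:diss_CNN} obtained by setting $A_i=B_i=C_i=0$, $D_i=W_i$, $c_i=n_i$, $N_i=1$, and what you have written is exactly that specialization carried out explicitly (multiply the LMI by the increment vector, then absorb the slope-restriction term from Lemma~\ref{lem:slope}). The extra remark about $\calL_l$ is fine but not required for the lemma as stated.
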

The proof follows as a special case of the proof of Lemma \ref{lem:diss_CNN} with $A_i=0$, $B_i=0$, $C_i=0$, and $D_i=W_i$, as well as $c_i=n_i$, $N_i=1$ and, accordingly, the LMI \eqref{eq:cert_FC} ensures $(Q,S,R)$-dissipativity \eqref{eq:diss_FC} / the incremental QC \eqref{eq:iQC} for fully connected layers \eqref{eq:FC_layer}.

\subsection{Incremental quadratic constraints for pooling layers}\label{sec:iQCs_pooling}
Besides the relaxation of activation functions \citep{fazlyab2019efficient}, we newly introduce incremental QCs to handle maximum and average pooling layers. These pooling layers downsample a signal channel-wise, i.\,e., multiple successive time/propagation steps of the input signal are mapped to one time/propagation step of the output signal. Such operations cannot easily be captured in a state space formulation which is why we represent them via incremental QCs instead. Alternatively, linear pooling layers, such as average pooling, can be unrolled to fully-connected layers, again yielding a sparse and redundant formulation thereof \citep{fazlyab2019efficient,pauli2022neural}. The use of incremental QCs suggested in this paper is not only more efficient but also allows to account for nonlinear maximum pooling layers, which is not possible in \citep{fazlyab2019efficient}.

\emph{Average pooling} $\calP_i^{\mathrm{av}}$ as defined in \eqref{eq:pooling_layer} is applied channel-wise such that for the $j$-th channel, we can define
\begin{equation*}
    w_{jk}^{i} = \frac{1}{\ell_i}\sum_{s = 1}^{\ell_i} w^{i-1}_{j,\ell_i (k - 1) +s} \eqqcolon\Pool^\mathrm{av}(v^{i}_{jk})
    , \quad
        k = 0,\ldots,N_i-1,
\end{equation*}
where $v_{jk}^i = \begin{bmatrix}
    w^{i-1}_{j,\ell_i (k - 1)+1} & \dots & w^{i-1}_{j,\ell_i k} 
\end{bmatrix}^\top\in\bbR^{\ell_i}$.

\begin{proposition}\label{prop:Lip_av_pooling}
The average pooling operation $\Pool^{\mathrm{av}}$ is Lipschitz continuous with $\mu_i=\frac{1}{\sqrt{\ell_i}}$, i.\,e.,
\begin{equation}\label{eq:Lip_av_pool}
    \vert w_{a,jk}^i-w_{b,jk}^i\vert^2\leq\mu_i^2\Vert v^{i}_{a,jk}-v^{i}_{b,jk}\Vert^2 \quad j = 1,\dots,c_i,~k=1,\dots,N_i\quad\forall v^{i}_{a,jk}, v^{i}_{b,jk}\in\bbR^{\ell_i}.
\end{equation}
\end{proposition}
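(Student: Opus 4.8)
The plan is to observe that, for a fixed channel $j$ and output index $k$, the average pooling map $\Pool^{\mathrm{av}}:\bbR^{\ell_i}\to\bbR$ is a single linear functional, and to bound its induced $2$-norm directly. Writing $\mathbf{1}\in\bbR^{\ell_i}$ for the all-ones vector, we have $\Pool^{\mathrm{av}}(v)=\tfrac{1}{\ell_i}\mathbf{1}^\top v$, so that by linearity
\begin{equation*}
    w_{a,jk}^i-w_{b,jk}^i=\tfrac{1}{\ell_i}\mathbf{1}^\top\!\left(v^i_{a,jk}-v^i_{b,jk}\right).
\end{equation*}

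Next I would apply the Cauchy--Schwarz inequality to this inner product, obtaining
\begin{equation*}
    \left\vert w_{a,jk}^i-w_{b,jk}^i\right\vert\leq\tfrac{1}{\ell_i}\Vert\mathbf{1}\Vert_2\,\Vert v^i_{a,jk}-v^i_{b,jk}\Vert_2.
\end{equation*}
Since $\Vert\mathbf{1}\Vert_2=\sqrt{\ell_i}$, the prefactor simplifies to $\tfrac{1}{\sqrt{\ell_i}}=\mu_i$, and squaring both sides yields exactly \eqref{eq:Lip_av_pool}. The bound holds for every $j=1,\dots,c_i$ and $k=1,\dots,N_i$, since the argument does not depend on these indices.

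There is no genuine obstacle here: the estimate rests on a single application of Cauchy--Schwarz. The only point worth recording is that $\mu_i=1/\sqrt{\ell_i}$ is the exact Lipschitz constant of $\Pool^{\mathrm{av}}$, attained whenever $v^i_{a,jk}-v^i_{b,jk}$ is parallel to $\mathbf{1}$; consequently the incremental QC extracted from Proposition~\ref{prop:Lip_av_pooling} --- obtained by encoding \eqref{eq:Lip_av_pool} channel- and step-wise and summing over $j$ and $k$ to match the layer-level form \eqref{eq:iQC} --- is the tightest norm-based relaxation available for an average pooling layer.
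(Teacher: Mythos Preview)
Your proof is correct and, if anything, cleaner than the paper's. The paper does not invoke Cauchy--Schwarz; instead it expands $\bigl(\sum_{l}\Delta w_{jl}^{i-1}\bigr)^2$ explicitly, subtracts it from $\ell_i\sum_l(\Delta w_{jl}^{i-1})^2$, and observes that the resulting quadratic form has matrix $M=\tfrac{1}{\ell_i}I-\tfrac{1}{\ell_i^2}\mathbf{1}\mathbf{1}^\top$, which is diagonally dominant and hence positive semidefinite. Your argument identifies $\Pool^{\mathrm{av}}(v)=\tfrac{1}{\ell_i}\mathbf{1}^\top v$ and applies Cauchy--Schwarz once, which is shorter and also makes the tightness claim ($\mu_i$ is attained when $\Delta v$ is parallel to $\mathbf{1}$) immediate. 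The paper's route has the minor advantage of exhibiting the matrix $M$ explicitly, which connects naturally to the quadratic-constraint viewpoint used elsewhere in the paper, but mathematically the two proofs are equivalent reformulations of the same inequality.
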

The proof can be found in Appendix 2. While \eqref{eq:Lip_av_pool} only considers Lipschitz continuity of one channel, the following incremental QC holds for all channels, for which we define the vertically flattened vector 
\begin{equation*}
\underline{v}^i_k\!=\!
\begin{bmatrix}
    w^{i-1}_{1,{\ell_i}(k-1)+1}\! & \!\dots\! & \! w^{i-1}_{c_i,{\ell_i}(k-1)+1}\! & \!w^{i-1}_{1,{\ell_i}(k-1)+2}\! & \!\dots\! & \! w^{i-1}_{c_i,{\ell_i}(k-1)+2} \! & \!\dots\! & \! w^{i-1}_{1,{\ell_i} k} \!  & \!\dots\! & \! w^{i-1}_{c_i,{\ell_i} k}
\end{bmatrix}^\top,
\end{equation*}
that collects entries from all $c_i=c_{i-1}$ channels and from all $\ell_i$ propagation steps (of the $(i-1)$-th signal) that are combined in the pooling operation corresponding to propagation step $k$ (of the $i$-th signal).

\begin{lemma}\label{lem:av_pooling}
    For all $T_i\in\bbS_+^{c_i}$, the average pooling operation $\calP_i^{\mathrm{av}}$ fulfills
    \begin{align}\label{eq:lemma_av_pool}
        \begin{bmatrix}
            w^i_{a,k}-w^i_{b,k}\\
            \underline{v}^{i}_{a,k}-\underline{v}^{i}_{b,k}
        \end{bmatrix}^\top
        \left[
        \begin{array}{c|ccc}
            -T_i     & 0 & \hdots & 0\\\hline
            0      & \mu_i^2T_i & & \\
            \vdots & & \ddots & 0\\
            0      & & 0 & \mu_i^2T_i
        \end{array}
        \right]
        \begin{bmatrix}
            w^i_{a,k}-w^i_{b,k}\\
            \underline{v}^{i}_{a,k}-\underline{v}^{i}_{b,k}
        \end{bmatrix}
        \geq 0, \quad
        \begin{split}
            k = 1,\dots,N_i, \\ \forall \underline{v}^{i}_{a,k}, \underline{v}^{i}_{b,k}\in\bbR^{\ell_i c_i}.
        \end{split}
    \end{align}
This implies that all $i\in\calI_P^{\mathrm{av}}$ average pooling layers are $(Q,S,R)$-dissipative, i.\,e., they satisfy \eqref{eq:iQC} with $Q_i=-\blkdiag(T_i,\dots,T_i)\in\bbS^{n_{i}}$, $S_i = 0$, $R_i=\mu_i^2\blkdiag(T_i,\dots,T_i)\in\bbS^{n_{i-1}}$.
\end{lemma}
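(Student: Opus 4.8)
The plan is to first prove the pointwise inequality \eqref{eq:lemma_av_pool} at a fixed propagation step $k$, and then to sum over $k=1,\dots,N_i$ to read off the incremental QC \eqref{eq:iQC}.

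\textbf{Pointwise step.} Write $\Delta(\cdot)=(\cdot)_a-(\cdot)_b$ and partition $\underline v^i_k$ into its $\ell_i$ sub-blocks $\underline v^{i,1}_k,\dots,\underline v^{i,\ell_i}_k\in\bbR^{c_i}$, where $\underline v^{i,s}_k$ is exactly the $(\ell_i(k-1)+s)$-th column of $w^{i-1}$. Expanding the quadratic form on the left of \eqref{eq:lemma_av_pool} with the given block-diagonal weight, the inequality to prove is
\[
    (\Delta w^i_k)^\top T_i\,\Delta w^i_k\;\le\;\mu_i^2\sum_{s=1}^{\ell_i}(\Delta\underline v^{i,s}_k)^\top T_i\,\Delta\underline v^{i,s}_k .
\]
The key point is that average pooling is linear and acts channel-wise, so \eqref{eq:pooling_layer} yields the vector identity $\Delta w^i_k=\tfrac1{\ell_i}\sum_{s=1}^{\ell_i}\Delta\underline v^{i,s}_k$. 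Since $T_i\succeq 0$, the map $x\mapsto x^\top T_i x$ is convex on $\bbR^{c_i}$, and Jensen's inequality with uniform weights $1/\ell_i$ gives $(\Delta w^i_k)^\top T_i\Delta w^i_k\le\tfrac1{\ell_i}\sum_s(\Delta\underline v^{i,s}_k)^\top T_i\Delta\underline v^{i,s}_k$; recalling $\mu_i^2=1/\ell_i$ from Proposition~\ref{prop:Lip_av_pooling}, this is exactly \eqref{eq:lemma_av_pool}. Equivalently, one may factor $T_i=U^\top U$ and invoke $\|\tfrac1{\ell_i}\sum_s U\Delta\underline v^{i,s}_k\|^2\le\tfrac1{\ell_i}\sum_s\|U\Delta\underline v^{i,s}_k\|^2$, or diagonalize $T_i=\sum_m\lambda_m u_mu_m^\top$ and apply the scalar bound of Proposition~\ref{prop:Lip_av_pooling} to each projected one-channel signal $u_m^\top\underline v^{i,\cdot}_k$.

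\textbf{Summation step.} Summing \eqref{eq:lemma_av_pool} over $k=1,\dots,N_i$ and matching indices: since $c_{i-1}=c_i$ and $N_{i-1}=\ell_i N_i$, and since the windows $\{\ell_i(k-1)+1,\dots,\ell_i k\}$, $k=1,\dots,N_i$, partition the columns of $w^{i-1}$ in order, the column-stacked vector $\underline w^{i-1}$ is precisely the concatenation of $\underline v^i_1,\dots,\underline v^i_{N_i}$, and $\underline w^i$ is the concatenation of $w^i_1,\dots,w^i_{N_i}$. Hence $\sum_k(\Delta w^i_k)^\top T_i\Delta w^i_k=(\Delta\underline w^i)^\top\blkdiag(T_i,\dots,T_i)\Delta\underline w^i$ with $N_i$ copies, and $\mu_i^2\sum_k\sum_s(\Delta\underline v^{i,s}_k)^\top T_i\Delta\underline v^{i,s}_k=\mu_i^2(\Delta\underline w^{i-1})^\top\blkdiag(T_i,\dots,T_i)\Delta\underline w^{i-1}$ with $\ell_i N_i$ copies, while the cross term vanishes. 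This is exactly \eqref{eq:iQC} with $Q_i=-\blkdiag(T_i,\dots,T_i)$, $S_i=0$, $R_i=\mu_i^2\blkdiag(T_i,\dots,T_i)$.

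I do not expect a substantive obstacle. The only mildly delicate points are (i) performing the convexity/Jensen step with a matrix weight rather than a scalar one, which is immediate once $T_i$ is factored, and (ii) the index bookkeeping that identifies the flattened signals $\underline w^{i-1}$ and $\underline w^i$ with the per-step blocks $\underline v^i_k$ and $w^i_k$ so that the block-diagonal $Q_i$ and $R_i$ come out with the right number of copies of $T_i$.
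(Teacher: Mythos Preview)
Your proposal is correct and follows essentially the same approach as the paper: exploit linearity of average pooling to identify $\Delta w^i_k$ as the mean of the $\Delta\underline v^{i,s}_k$, handle the matrix weight $T_i$ by factoring $T_i=L_i^\top L_i$ (your alternative formulation with $U^\top U$ is exactly what the paper does), reduce to the identity-weight case which is the channel-wise Lipschitz bound of Proposition~\ref{prop:Lip_av_pooling}, and then stack over $k$ to obtain \eqref{eq:iQC}. Your primary argument via Jensen's inequality for the convex map $x\mapsto x^\top T_i x$ is a slightly more self-contained variant that bypasses the explicit appeal to Proposition~\ref{prop:Lip_av_pooling}, but the content is the same.
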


\begin{proof}
The positive semidefinite matrix $T_i$ has a factorization $T_i=L_i^\top L_i$, $L_i\in\bbR^{c_i\times c_i}$, which we use to define $\underline{\tilde{v}}_k^i\coloneqq\blkdiag(L_i,\dots,L_i)\underline{v}_k^i$, $k=1,\dots,N_i$. 
As average pooling is a linear operator, $L_i$ can be pulled out of the pooling operation, i.\,e., $\tilde{w}_k^i=L_iw_k^i$. 
Hence, \eqref{eq:lemma_av_pool} is equivalent to
    \begin{equation*}
        \begin{bmatrix}
            \tilde{w}^i_{a,k}-\tilde{w}^i_{b,k}\\
            \underline{\tilde{v}}_{a,k}^{i}-\underline{\tilde{v}}_{b,k}^{i}
        \end{bmatrix}^\top
        \left[
        \begin{array}{cc}
            -I_{c_i} & 0\\
            0 & \mu_i^2I_{\ell_i {c_i}} 
        \end{array}
        \right]
        \begin{bmatrix}
            \tilde{w}^i_{a,k}-\tilde{w}^i_{b,k}\\
            \underline{\tilde{v}}^{i}_{a,k}-\underline{\tilde{v}}^{i}_{b,k}
        \end{bmatrix}
        \geq 0, \quad k = 1,\dots,N_i,\quad \forall \underline{\tilde{v}}_{a,k}^{i}, \underline{\tilde{v}}_{b,k}^{i}\in\bbR^{\ell_i c_i},
    \end{equation*}
which is the channel-wise stacked up version of the Lipschitz condition \eqref{eq:Lip_av_pool}. This can easily be seen by reordering the entries of the flattened vectors $\underline{\tilde{v}}^{i}_{a,k}$ and $\underline{\tilde{v}}^{i}_{b,k}$. Stacking up the incremental QC \eqref{eq:lemma_av_pool} for all $k=1,\dots,N_i$ propagation steps yields a special case of \eqref{eq:iQC}.
\end{proof}

An alternative commonly used pooling operation is \emph{maximum pooling} $\calP_i^\mathrm{max}$ as defined in \eqref{eq:pooling_layer}. This nonlinear operation is $1$-Lipschitz in case the kernel size and the stride coincide.
\begin{lemma}\label{lem:max_pooling}
    For all $\Sigma_i\in\bbD_+^{c_i}$, the maximum pooling operation $\calP_i^\mathrm{\max}$ fulfills
    \begin{align}\label{eq:lemma_max_pool}
        \begin{bmatrix}
            w^i_{a,k}-w^i_{b,k}\\
            \underline{v}^{i}_{a,k}-\underline{v}^{i}_{b,k}
        \end{bmatrix}^\top
        \left[
        \begin{array}{c|ccc}
            -\Sigma_i & 0 & \hdots & 0 \\\hline
            0& \Sigma_i &&\\
            \vdots& & \ddots & \\
            0& & & \Sigma_i
        \end{array}
        \right]
        \begin{bmatrix}
            w^i_{a,k}-w^i_{b,k}\\
            \underline{v}^{i}_{a,k}-\underline{v}^{i}_{b,k}
        \end{bmatrix}
        \geq 0,\quad
        \begin{split}
            \quad k = 1,\dots,N_i,\\
            \forall \underline{v}^{i}_{a,k}, \underline{v}^{i}_{b,k}\in\bbR^{\ell_i c_i}.
        \end{split}
    \end{align}
This implies that all $i\in\calI_P^{\mathrm{max}}$ maximum pooling layers are $(Q,S,R)$-dissipative, i.\,e., they satisfy \eqref{eq:iQC} with $Q_i=-\blkdiag(\Sigma_i,\dots,\Sigma_i)\in\bbD^{n_{i}}$, $S_i = 0$, $R_i=\blkdiag(\Sigma_i,\dots,\Sigma_i)\in\bbD^{n_{i-1}}$.
\end{lemma}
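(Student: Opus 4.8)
The plan is to follow the same route as the proof of Lemma~\ref{lem:av_pooling}, but to accommodate the fact that $\calP_i^{\mathrm{max}}$ is nonlinear and therefore does not commute with a generic factor $L_i$ of a positive semidefinite weighting matrix. The key observation that makes the argument go through regardless is that maximum pooling acts channel by channel, so it suffices to take the weighting matrix $\Sigma_i=\diag(\sigma_{i,1},\dots,\sigma_{i,c_i})$ \emph{diagonal}: then \eqref{eq:lemma_max_pool} decouples into $c_i$ scalar inequalities weighted by the positive numbers $\sigma_{i,j}$, and each of these is exactly the channel-wise $1$-Lipschitz estimate for the scalar $\max$ operation, the maximum-pooling counterpart of Proposition~\ref{prop:Lip_av_pooling}.

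Concretely, I would proceed in three steps. First, record the elementary fact that the scalar map $\max\colon\bbR^{\ell_i}\to\bbR$ is $1$-Lipschitz in the $\ell_\infty$ norm and hence in the Euclidean norm: for $x,y\in\bbR^{\ell_i}$, assuming without loss of generality $\max(x)\ge\max(y)$ and letting $s^\star$ attain $\max(x)$, one has $0\le\max(x)-\max(y)\le x_{s^\star}-y_{s^\star}\le\max_s\lvert x_s-y_s\rvert\le\lVert x-y\rVert_2$, so $\lvert\max(x)-\max(y)\rvert^2\le\lVert x-y\rVert_2^2$. Applying this to the $j$-th channel, where $w^i_{jk}=\max(v^i_{jk})$ by \eqref{eq:pooling_layer} and $v^i_{jk}$ collects the $\ell_i$ window entries, gives $\lvert w^i_{a,jk}-w^i_{b,jk}\rvert^2\le\lVert v^i_{a,jk}-v^i_{b,jk}\rVert_2^2$ for every $j$ and $k$.

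Second, multiply this channel-wise inequality by $\sigma_{i,j}>0$ and sum over $j=1,\dots,c_i$; after identifying $\sum_j\sigma_{i,j}\lvert w^i_{a,jk}-w^i_{b,jk}\rvert^2=(w^i_{a,k}-w^i_{b,k})^\top\Sigma_i(w^i_{a,k}-w^i_{b,k})$ and, using the block layout of $\underline v^i_k$ (channels innermost, the $\ell_i$ window positions outermost), $\sum_j\sigma_{i,j}\lVert v^i_{a,jk}-v^i_{b,jk}\rVert_2^2=(\underline v^i_{a,k}-\underline v^i_{b,k})^\top\blkdiag(\Sigma_i,\dots,\Sigma_i)(\underline v^i_{a,k}-\underline v^i_{b,k})$, this is precisely the quadratic form in \eqref{eq:lemma_max_pool}. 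Third, since the mixed (off-diagonal) block of that quadratic form vanishes and — because stride and kernel size both equal $\ell_i$ — the windows $\underline v^i_1,\dots,\underline v^i_{N_i}$ partition the $n_{i-1}=c_i\ell_iN_i$ entries of the column-wise flattened input $\underline w^{i-1}$ (with the concatenation of $w^i_1,\dots,w^i_{N_i}$ equal to $\underline w^i$), stacking \eqref{eq:lemma_max_pool} over $k=1,\dots,N_i$ yields \eqref{eq:iQC} with $Q_i=-\blkdiag(\Sigma_i,\dots,\Sigma_i)$, $S_i=0$, $R_i=\blkdiag(\Sigma_i,\dots,\Sigma_i)$, as claimed.

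The computations here are routine; the only point that requires care — and the reason the proof is \emph{not} a verbatim copy of the proof of Lemma~\ref{lem:av_pooling} — is that one cannot push a Cholesky-type factor $L_i$ of an arbitrary $\Sigma_i\succeq0$ through the nonlinear $\max$. Restricting to diagonal $\Sigma_i$ circumvents this, at the mild cost that maximum-pooling layers only admit diagonal $Q_i,R_i$, which is why the statement asks for $\Sigma_i\in\bbD_+^{c_i}$ rather than $\bbS_+^{c_i}$. A secondary bookkeeping point, which I expect to be the most error-prone part, is to make the flattening/reordering conventions for $\underline v^i_k$ and $\underline w^{i-1}$ match up so that the per-step inequalities assemble into exactly the block-diagonal $Q_i$, $R_i$ above; this is where the coinciding stride and kernel size, and hence the disjointness of the pooling windows, is essential.
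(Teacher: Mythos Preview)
Your proposal is correct and follows essentially the same approach as the paper: both reduce \eqref{eq:lemma_max_pool} to the channel-wise scalar inequalities $\sigma_{i,j}\lvert w^i_{a,jk}-w^i_{b,jk}\rvert^2\le\sigma_{i,j}\lVert v^i_{a,jk}-v^i_{b,jk}\rVert_2^2$, invoke $1$-Lipschitz continuity of the scalar $\max$, and then stack over channels and over $k$ to obtain \eqref{eq:iQC}. Your version is slightly more detailed in that you supply an explicit argument for the $1$-Lipschitz bound and spell out why diagonality of $\Sigma_i$ is needed, but the logical structure is the same.
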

\begin{proof}
    Condition \eqref{eq:lemma_max_pool} is the stacked up version of the incremental QCs
\begin{equation*}
    \sigma_i\vert w^i_{a,jk}-w^i_{b,jk}\vert^2\leq\sigma_i\Vert v^{i}_{a,jk}-v^{i}_{b,jk}\Vert^2, \quad j = 1,\dots,c_i,~k = 1,\dots,N_i,\quad\forall v^{i}_{a,jk}, v_{b,jk}^{i}\in\bbR^{\ell_i},
\end{equation*}
which holds true for all nonnegative scalars $\sigma_i\geq0$ by $1$-Lipschitz continuity of the maximum pooling operation. The multiplier is then $\Sigma_i = \diag(\sigma_1,\dots,\sigma_{c_i})$. Accordingly to the proof of  Lemma \ref{lem:av_pooling}, we can stack up \eqref{eq:lemma_max_pool} for all $k=1,\dots,N_i$ propagation steps to obtain a special case of \eqref{eq:iQC}.
\end{proof}

The incremental QC \eqref{eq:lemma_av_pool} describing average pooling allows full matrices $T_i$ as multipliers, whereas maximum pooling requires $\Sigma_i$ in \eqref{eq:lemma_max_pool} to be diagonal. This result corresponds to the main results in \citep{kulkarni2002incremental} who address that only linear operations maintain incremental positivity. Note that \citet{fazlyab2019efficient} wrongly proposed full matrices $\Lambda_i$ as multipliers in \eqref{eq:slope_restriction} which was later corrected by \citet{pauli2021training}.

\subsection{Semidefinite program for Lipschitz constant estimation}\label{sec:SDP}
In this section, we propose an SDP to certify an upper bound on the Lipschitz constant for a given 1D CNN, that renders Theorem \ref{thm:main} computational.

Pooling layers are incrementally $(Q,S,R)$-dissipative by design according to Lemma \ref{lem:av_pooling} and Lemma \ref{lem:max_pooling}, respectively, and in addition Lemma~\ref{lem:diss_CNN} and  Lemma~\ref{lem:diss_FC} provide us LMIs to enforce dissipativity onto convolutional and fully connected layers such that all $\calC_i$-, $\calP_i$-, and $\calF_i$-layers in \eqref{eq:CNN} are incrementally $(Q,S,R)$-dissipative. Our goal is to determine an accurate Lipschitz bound of the input-output mapping defined by a given 1D CNN, for which the dissipativity properties of the individual layers are a utility. Hence, in all $i\in\calI_C\cup\calI_F$, we consider $Q_i$, $S_i$, and $R_i$ as decision variables that provide us degrees of freedom in the optimization. 
To reduce the number of decision variables, which is computationally favorable, we parameterize $Q_i$, $S_i$, and $R_i$ in such a way that $H$ as defined in \eqref{eq:diss_coupling} is equal to zero. This yields $R_1=Q_0=\gamma^2 I$, $S_i=0$, and $R_i=-Q_{i-1}$, $i=1,\dots,l$, $Q_l=-I$. Further, we note that in pooling layers $Q_i$ and $R_i$ are both parameterized by the same matrix, such that we use this information to specify the structure of $\widetilde{Q}_{i-1}$ of the preceding convolutional layer, which yields Corollary~\ref{cor:reduced_vars}.

\begin{corollary}\label{cor:reduced_vars}
    Let $\CNN_\theta$ and $\gamma > 0$ be given and let all activation functions be slope-restricted on $[0,1]$. Further, we choose  $\widetilde{Q}_0=-\tilde{\gamma}^2 I$, where $\gamma = \tilde{\gamma} \prod_{s\in\calI_{P}^\mathrm{av}} \mu_{s}$. If there exist
    \begin{enumerate}[label=(\roman*)]
        \item  $\widetilde{Q}_i\in\bbS^{c_i}$ ($\widetilde{Q}_i\in\bbD^{c_i}$ if a convolutional layer is followed by a maximum pooling layer), $P_i\in\bbS_+^{n_{x_i}}$, and $\Lambda_i\in\bbD_+^{c_i}$ such that
        \begin{align}\label{eq:cert_reduced_vars_CNN}
            \begin{split}
            \begin{bmatrix}
            P_i-A_i^\top P_iA_i  & -A_i^\top P_iB_i & -C_i^\top\Lambda_i\\
            -B_i^\top P_iA_i &  -\widetilde{Q}_{i-1}-B_i^\top P_iB_i  & -D_i^\top\Lambda_i\\
            -\Lambda_i C_i & -\Lambda_i D_i &2\Lambda_i+\widetilde{Q}_i
            \end{bmatrix}\succeq 0\quad \forall i\in\calI_C,
            \end{split}
        \end{align}
        \item $Q_i\in\bbS^{n_i}$ and $\Lambda_i\in\bbD_+^{n_i}$ such that
\begin{align}\label{eq:cert_reduced_vars_FC}
    \begin{split}
    \begin{bmatrix}
        -Q_{i-1} & -W_{i}^\top \Lambda_{i}\\
        -\Lambda_{i} W_{i} & 2\Lambda_{i}+Q_{i}
    \end{bmatrix}\succeq 0\quad \forall i = \calI_F \backslash\{l\}, \quad    
    \begin{bmatrix}
        -Q_{l-1} & -W_{l}^\top\\
        -W_{l} & I
    \end{bmatrix}\succeq 0,
    \end{split}
\end{align}
    \end{enumerate}
    then the $\CNN_\theta$ is $\gamma$-Lipschitz continuous.
\end{corollary}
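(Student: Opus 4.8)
The plan is to obtain Corollary \ref{cor:reduced_vars} by re-running the telescoping estimate of Theorem \ref{thm:main} specialized to the reduced parameterization ($S_i=0$ throughout, $R_i=-Q_{i-1}$, $Q_l=-I$), but retaining the contraction factors $\mu_s$ of the average pooling layers instead of discarding them. Concretely, I would chain the layers through a sequence of weight matrices $M_0,\dots,M_l$ with $M_0=\blkdiag(-\widetilde Q_0,\dots)=\tilde\gamma^2 I$ and $M_l=-Q_l=I$, and it then suffices to show that each layerwise condition in the statement, together with the pooling lemmas, certifies the corresponding link $\|\Delta w^i\|_{M_i}^2\le c_i\,\|\Delta w^{i-1}\|_{M_{i-1}}^2$, with $c_i=\mu_i^2$ for $i\in\calI_P^{\mathrm{av}}$ and $c_i=1$ otherwise.

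The links come from substituting the reduced parameterization into the certificates already proved. For $i\in\calI_C$, inserting $\widetilde S_i=0$ and $\widetilde R_i=-\widetilde Q_{i-1}$ into \eqref{eq:cert_CNN} of Lemma \ref{lem:diss_CNN} gives exactly \eqref{eq:cert_reduced_vars_CNN}; hence feasibility of \eqref{eq:cert_reduced_vars_CNN} makes $\calC_i$ incrementally $(Q,S,R)$-dissipative and delivers $\|\Delta w^i\|_{\blkdiag(-\widetilde Q_i,\dots)}^2\le\|\Delta w^{i-1}\|_{\blkdiag(-\widetilde Q_{i-1},\dots)}^2$. For $i\in\calI_F\setminus\{l\}$, inserting $S_i=0$, $R_i=-Q_{i-1}$ into \eqref{eq:cert_FC} of Lemma \ref{lem:diss_FC} gives the first LMI of \eqref{eq:cert_reduced_vars_FC} and hence $\|\Delta w^i\|_{-Q_i}^2\le\|\Delta w^{i-1}\|_{-Q_{i-1}}^2$. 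For the output layer $\calL_l$, whose absent activation may be read as the identity (slope-restricted on $[0,1]$), the choice $\Lambda_l=I$, $Q_l=-I$, $R_l=-Q_{l-1}$ turns \eqref{eq:cert_FC} into the second LMI of \eqref{eq:cert_reduced_vars_FC}, which (a Schur complement, using $\Delta w^l=W_l\Delta w^{l-1}$) yields $\|\Delta w^l\|^2\le\|\Delta w^{l-1}\|_{-Q_{l-1}}^2$. Pooling layers need no LMI: Lemma \ref{lem:av_pooling} with $T_i=-\widetilde Q_{i-1}$ gives $\|\Delta w^i\|_{\blkdiag(T_i,\dots)}^2\le\mu_i^2\|\Delta w^{i-1}\|_{\blkdiag(T_i,\dots)}^2$, and Lemma \ref{lem:max_pooling} with $\Sigma_i=-\widetilde Q_{i-1}$ gives the same with $\mu_i$ replaced by $1$. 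Forcing the input-side weight of a pooling layer to equal $-\widetilde Q_{i-1}$ of the preceding convolutional layer is precisely what transfers the diagonal-structure requirement of maximum pooling onto $\widetilde Q_{i-1}$, what lets the per-channel weight pass the pooling step unchanged (so no extra variable appears there), and what enforces $\widetilde Q_{i-1}\prec0$ so that $T_i,\Sigma_i$ are positive definite.

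Concatenating the links from $M_0=\tilde\gamma^2 I$ to $M_l=I$ — the flattening $\calF$ only reindexes the blocks and preserves norms — every convolutional, fully connected and maximum pooling layer keeps $\|\Delta w^i\|^2\le\|\Delta w^{i-1}\|^2$ in the relevant weighted norms, while each average pooling layer multiplies the bound by $\mu_s^2$; therefore $\|\Delta w^l\|^2\le\big(\prod_{s\in\calI_P^{\mathrm{av}}}\mu_s^2\big)\tilde\gamma^2\|\Delta w^0\|^2=\gamma^2\|\Delta w^0\|^2$ for $\gamma=\tilde\gamma\prod_s\mu_s$, i.e.\ $\CNN_\theta$ is $\gamma$-Lipschitz.

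The step I expect to require the most care is exactly this bookkeeping around the pooling layers: lining up the weights $-\widetilde Q_i$ and $-Q_i$ consistently through each pooling step (so that the pooling multiplier genuinely is the preceding layer's weight and the sign/structure side conditions hold), and collecting one and only one factor $\mu_s^2$ per average pooling layer. This is also where the scaling $\gamma=\tilde\gamma\prod_s\mu_s$ becomes essential: if one instead assembled the coupling matrix $H$ of Theorem \ref{thm:main} with these $(Q_i,S_i,R_i)$ and parameter $\gamma$, its first block $\gamma^2 I-R_1=(\gamma^2-\tilde\gamma^2)I$ would be indefinite, so the pooling contractions must be carried along the chain rather than absorbed into $H$. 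The remaining ingredients — the substitutions into \eqref{eq:cert_CNN}/\eqref{eq:cert_FC}, the Schur complement for $\calL_l$, and the reshaping for $\calF$ — are routine.
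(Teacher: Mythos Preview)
Your argument is correct. The substitutions into Lemmas~\ref{lem:diss_CNN} and~\ref{lem:diss_FC}, the Schur-complement treatment of the affine output layer, and the use of Lemmas~\ref{lem:av_pooling}/\ref{lem:max_pooling} with $T_i=\Sigma_i=-\widetilde Q_{i-1}$ all go through; the only small point to make explicit is that $-\widetilde Q_{i-1}\succeq 0$ is forced by the $(2,2)$ block (resp.\ $(1,1)$ block) of the next LMI in the chain, and that the proof of Lemma~\ref{lem:av_pooling} in fact only needs $T_i$ positive semidefinite, so the factorization $T_i=L_i^\top L_i$ still applies.

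Your route differs from the paper's in presentation rather than in substance. The paper stays inside the framework of Theorem~\ref{thm:main}: it keeps the coupling matrix $H$, inserts the pooling $(Q,S,R)$-triples, and for each average pooling layer conjugates $H$ by $\blkdiag(\mu_i^{-1}I,I)$ to push the factor $\mu_i^2$ into the top-left block (turning $\gamma^2$ into $\tilde\gamma^2$), and only then sets $H=0$. You instead bypass $H$ entirely and telescope the layerwise dissipativity inequalities directly, carrying the $\mu_s^2$ factors along the chain; your remark that plugging these $(Q_i,S_i,R_i)$ into $H$ with the target $\gamma$ would make the first block indefinite is exactly why the paper needs the rescaling step. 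The two arguments are equivalent (setting $H=0$ is precisely what makes the sum of supply functions telescope), but yours is a bit more transparent about where the product $\prod_s\mu_s$ comes from, while the paper's version makes clearer that Corollary~\ref{cor:reduced_vars} is a specialization of Theorem~\ref{thm:main}.
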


A complete proof is given in Appendix 3. For a given CNN, we can find an upper bound on the Lipschitz constant by solving the semidefinite program
\begin{equation}\label{eq:SDP}
    \min_{\gamma^2,Q,\Lambda,P}\gamma^2 \quad\text{s.\,t.}\quad \eqref{eq:cert_reduced_vars_CNN},\eqref{eq:cert_reduced_vars_FC},
\end{equation}
where $\Lambda=\{\Lambda_i\}_{i\in\calI_C\cup \calI_F\backslash \{l\}}$, $Q=\{Q_i\}_{i\in\calI_C\cup \calI_F\backslash \{l\}}$, $P=\{P_i\}_{i\in\calI_C}$.

\begin{remark}
The matrices $Q_i$ carry information on the layer-wise worst-case gain and worst-case direction of the corresponding input to the layer. In case a maximum pooling layer exists, we lose the information of the worst-case direction through the parameterization of $Q_i$ as a diagonal matrix.
\end{remark}

\begin{remark}
    For fully connected NNs, the set of layer-wise LMIs \eqref{eq:cert_reduced_vars_FC} is equivalent to the LMI in \citet{fazlyab2019efficient} by chordal sparsity \citep{xue2022chordal}. For CNNs our approach is slightly less accurate than \cite{fazlyab2019efficient} , but unlike Fazlyab, we can provide Lipschitz constant estimates that hold for all input dimensions. In addition, our method utilizes multipliers $\Lambda_i$ of dimension $c_i$, whereas using Toeplitz matrices and the approach in \cite{fazlyab2019efficient} requires the use of larger multipliers of size $c_iN_i$ for each layer, i.e., more degrees of freedom. 
\end{remark}

\section{Simulation results}\label{sec:sim}
\begin{figure}
\begin{minipage}{0.3\textwidth}
%
%
\definecolor{mycolor1}{rgb}{0.00000,0.44700,0.74100}%
\definecolor{mycolor2}{rgb}{0.85000,0.32500,0.09800}%
\definecolor{mycolor3}{rgb}{0.92900,0.69400,0.12500}%
\definecolor{mycolor4}{rgb}{0.49400,0.18400,0.55600}%
\begin{tikzpicture}

\begin{axis}[%
width=1.2in,
height=0.65in,
at={(0.677in,2.149in)},
scale only axis,
xmin=3,
xmax=60,
xlabel style={font=\color{white!15!black}},
ymin=60,
ymax=200,
ylabel style={font=\color{white!15!black}},
ylabel={Lip. bound},
axis background/.style={fill=white},
legend style={at={(0.97,0.03)}, anchor=south east, legend cell align=left, align=left, draw=white!15!black},
label style={font=\footnotesize},
tick label style={font=\tiny},
legend style={font=\tiny}
]
\addplot [color=mycolor1, line width=1pt]
  table[row sep=crcr]{%
3	140.306592595461\\
6	140.306592595461\\
9	140.306592595461\\
12	140.306592595461\\
15	140.306592595461\\
18	140.306592595461\\
21	140.306592595461\\
24	140.306592595461\\
27	140.306592595461\\
30	140.306592595461\\
33	140.306592595461\\
36	140.306592595461\\
39	140.306592595461\\
42	140.306592595461\\
45	140.306592595461\\
48	140.306592595461\\
51	140.306592595461\\
54	140.306592595461\\
57	140.306592595461\\
60	140.306592595461\\
};

\addplot [color=mycolor2, line width=1pt]
  table[row sep=crcr]{%
3	64.6011917257882\\
6	107.832760715923\\
9	122.184387503777\\
12	127.980360591263\\
15	130.822291195024\\
18	132.414695545216\\
21	133.393971693215\\
24	134.038535146783\\
27	134.485325649141\\
30	134.807755253411\\
33	135.048142618723\\
36	135.232181112527\\
39	135.37627464757\\
42	135.491200776381\\
45	135.584361858702\\
48	135.660936888243\\
51	135.724650176949\\
54	135.77823167359\\
57	135.823727358751\\
60	135.862697283693\\
};

\addplot [color=mycolor3, line width=1pt]
  table[row sep=crcr]{%
3	83.3391522848191\\
6	120.93281241941\\
9	133.388850837922\\
12	138.501086709154\\
15	141.038344591818\\
18	142.470773356887\\
21	143.355706988429\\
24	143.939696107219\\
27	144.344944557659\\
30	144.637485192993\\
33	144.855492616225\\
36	145.022266137605\\
39	145.15267281041\\
42	145.256558002194\\
45	145.34064985826\\
48	145.409672141268\\
51	145.467019820294\\
54	145.515183501405\\
57	145.55602362819\\
60	145.590952548754\\
};

\addplot [color=mycolor4, line width=1pt]
  table[row sep=crcr]{%
3	127.988995958013\\
6	167.379786514624\\
9	179.589424827064\\
12	184.654268215384\\
15	187.208286457326\\
18	188.669054813512\\
21	189.580680413561\\
24	190.187081422862\\
27	190.610559667602\\
30	190.917841591238\\
33	191.147814495091\\
36	191.324373680137\\
39	191.462854275179\\
42	191.573461658552\\
45	191.663199786488\\
48	191.737004309172\\
51	191.798434157485\\
54	191.850107740155\\
57	191.893986076025\\
60	191.931561286121\\
};

\end{axis}

\begin{axis}[%
width=1.2in,
height=0.5in,
at={(0.677in,1.488in)},
scale only axis,
xmin=3,
xmax=60,
xlabel style={font=\color{white!15!black}},
xlabel={input dimension $N_0$},
ymode=log,
ymin=0.0001,
ymax=235.2623588,
yminorticks=true,
ylabel style={font=\color{white!15!black}},
ylabel={Comp. time},
axis background/.style={fill=white},
label style={font=\footnotesize},
tick label style={font=\tiny},
legend style={font=\tiny}
]
\addplot [color=mycolor1, line width=1pt, forget plot]
  table[row sep=crcr]{%
3	0.3203857\\
60	0.3203857\\
};
\addplot [color=mycolor2, line width=1pt, forget plot]
  table[row sep=crcr]{%
3	1.0518046\\
6	2.6372734\\
9	1.2694363\\
12	2.2486946\\
15	3.7465999\\
18	5.6314204\\
21	9.2214331\\
24	13.5990067\\
27	21.9202642\\
30	30.3662525\\
33	42.4561613\\
36	53.1794025\\
39	65.6260397\\
42	114.3584952\\
45	132.9288481\\
48	133.9066401\\
51	146.9539268\\
54	189.1498867\\
57	235.2623588\\
60	231.0006264\\
};
\addplot [color=mycolor3, line width=1pt, forget plot]
  table[row sep=crcr]{%
3	0.4961973\\
6	0.3472111\\
9	0.5425934\\
12	0.9012752\\
15	1.4863074\\
18	2.2858194\\
21	3.5611259\\
24	5.9025435\\
27	11.0999751\\
30	13.2803821\\
33	16.0834352\\
36	23.0618194\\
39	25.5367588\\
42	62.5058672\\
45	41.6215871\\
48	60.2990043\\
51	71.111382\\
54	84.9275442\\
57	101.6343933\\
60	107.574662\\
};
\addplot [color=mycolor4, line width=1pt, forget plot]
  table[row sep=crcr]{%
3	0.0003373\\
6	0.0006666\\
9	0.000777\\
12	0.0009471\\
15	0.0028044\\
18	0.0017485\\
21	0.0029518\\
24	0.0025063\\
27	0.003842\\
30	0.0042227\\
33	0.0069212\\
36	0.0055628\\
39	0.009929\\
42	0.0125265\\
45	0.0125091\\
48	0.0133787\\
51	0.0157536\\
54	0.0121658\\
57	0.0199408\\
60	0.0185558\\
};
\end{axis}
\end{tikzpicture}%
\end{minipage}
\begin{minipage}{0.34\textwidth}
        \centering
    {\tiny

    \begin{tikzpicture}[node distance = 0.65cm, auto]
        \node [block4] (input) {
        Input signal: 128 (dimension) x 1 (channel)
        };
        \node [block, below of=input] (CNN1) {
        Conv: 3 kernel + front padding: 128 x $c_1$
        };
        \node [block2, below of=CNN1,node distance = 0.7cm] (avpool1) {
        Average pool: 2 kernel + 2 stride: 128 x $c_1$
        };
        \node [block, below of=avpool1] (CNN2) {
        Conv: 3 kernel + front padding: 64 x $c_2$
        };
        \node [block2, below of=CNN2,node distance = 0.7cm] (avpool2) {
        Average pool: 2 kernel + 2 stride: 32 x $c_2$
        };
        \node [block3, below of=avpool2,node distance = 0.7cm] (FC) {
        Dense: fully connected layer
        };
        \node [block4, below of=FC] (output) {
        Output: 1 of 5 classes
        };
        \path [line] (input) -- (CNN1);
        \path [line] (CNN1) -- node [midway,align=right] {ReLU} (avpool1);
        \path [line] (avpool1) -- (CNN2);
        \path [line] (CNN2) -- node [midway,align=right] {ReLU} (avpool2);
        \path [line] (avpool2) -- node [midway,align=right] {flatten} (FC);
        \path [line] (FC) -- (output);
    \end{tikzpicture}}
\end{minipage}
\begin{minipage}{0.33\textwidth}
%
%
\definecolor{mycolor1}{rgb}{0.00000,0.44700,0.74100}%
\definecolor{mycolor2}{rgb}{0.85000,0.32500,0.09800}%
\definecolor{mycolor3}{rgb}{0.92900,0.69400,0.12500}%
\definecolor{mycolor4}{rgb}{0.49400,0.18400,0.55600}%
\begin{tikzpicture}

\begin{axis}[%
width=1.48in,
height=0.5in,
at={(0.677in,1.5in)},
scale only axis,
bar shift auto,
xmin=0.509090909090909,
xmax=6.49090909090909,
xtick={1,2,3,4,5,6},
xticklabels={{[2,4]},{[4,8]},{[6,12]},{[8,16]},{[10,20]},{[12,24]}},
ymin=0,
ymax=2,
ylabel style={font=\color{white!15!black}},
ylabel near ticks,
ylabel={Lip. bound},
ybar=1pt,
axis background/.style={fill=white},
label style={font=\footnotesize},
tick label style={font=\tiny},
legend style={font=\tiny}
]
\addplot[ybar, bar width=2, fill=mycolor1, draw=black, area legend] table[row sep=crcr] {%
1	1\\
2	1\\
3	1\\
4	1\\
5	1\\
6	1\\
};
\addplot[ybar, bar width=2, fill=mycolor2, draw=black, area legend] table[row sep=crcr] {%
1	0.748898090940656\\
2	0.603426214879626\\
3	0.588040660804116\\
4	0.569223734406961\\
5	0.593130115502298\\
6	0.725289749474632\\
};
\addplot[ybar, bar width=2, fill=mycolor3, draw=black, area legend] table[row sep=crcr] {%
1	0.891901230161797\\
2	0.792112352737788\\
3	0.703233713769858\\
4	0.711509274997895\\
5	0.707419348121777\\
6	0.907130729936554\\
};
\addplot[ybar, bar width=2, fill=mycolor4, draw=black, area legend] table[row sep=crcr] {%
1	1.2486717565769\\
2	1.46523719443985\\
3	1.53266573791949\\
4	1.93538545673751\\
5	1.69877515674121\\
6	1.83525472664121\\
};
\end{axis}

\begin{axis}[%
width=1.48in,
height=0.5in,
at={(0.677in,0.673in)},
scale only axis,
bar shift auto,
xmin=0.509090909090909,
xmax=6.49090909090909,
xtick={1,2,3,4,5,6},
xticklabels={{[2,4]},{[4,8]},{[6,12]},{[8,16]},{[10,20]},{[12,24]}},
xlabel style={font=\color{white!15!black}},
xlabel={channel sizes $[c_1,c_2]$},
ymin=0,
ymax=120,
ylabel style={font=\color{white!15!black}},
ylabel={Comp. time},
ylabel near ticks,
ybar=1pt,
axis background/.style={fill=white},
label style={font=\footnotesize},
tick label style={font=\tiny},
legend style={font=\tiny}
]
\addplot[ybar, bar width=2, fill=mycolor1, draw=black, area legend] table[row sep=crcr] {%
1	1\\
2	1\\
3	1\\
4	1\\
5	1\\
6	1\\
};

\addplot[ybar, bar width=2, fill=mycolor2, draw=black, area legend] table[row sep=crcr] {%
1	5.89813364806204\\
2	50.3075333729295\\
3	107.678406668425\\
4	86.674077110748\\
5	88.7999724220606\\
6	95.3374607831382\\
};

\addplot[ybar, bar width=2, fill=mycolor3, draw=black, area legend] table[row sep=crcr] {%
1	4.7622499453954\\
2	38.8299575227987\\
3	73.3793232765399\\
4	59.7661190831457\\
5	69.8534112206139\\
6	77.0466399453686\\
};

\addplot[ybar, bar width=2, fill=mycolor4, draw=black, area legend] table[row sep=crcr] {%
1	0.0279138802954191\\
2	0.0124685387477629\\
3	0.0139361796356678\\
4	0.0120715924130467\\
5	0.00520358937254051\\
6	0.0055707473364852\\
};

\end{axis}
\end{tikzpicture}%
\end{minipage}
\vspace{-0.3cm}
\caption{Comparison of our method (\textbf{\textcolor{mycolor1}{--}})
, LipSDP-Neuron (\textbf{\textcolor{mycolor2}{--}}), LipSDP-Layer (\textbf{\textcolor{mycolor3}{--}}), and the spectral norm product (\textbf{\textcolor{mycolor4}{--}}).
Left: Lipschitz bounds and computation time for fully convolutional neural network over input dimension, 
middle: architecture of CNN, right: Normalized Lipschitz bounds and normalized computation times over increasing channel sizes.}
\label{fig:conv}
\vspace{-0.7cm}
\end{figure}
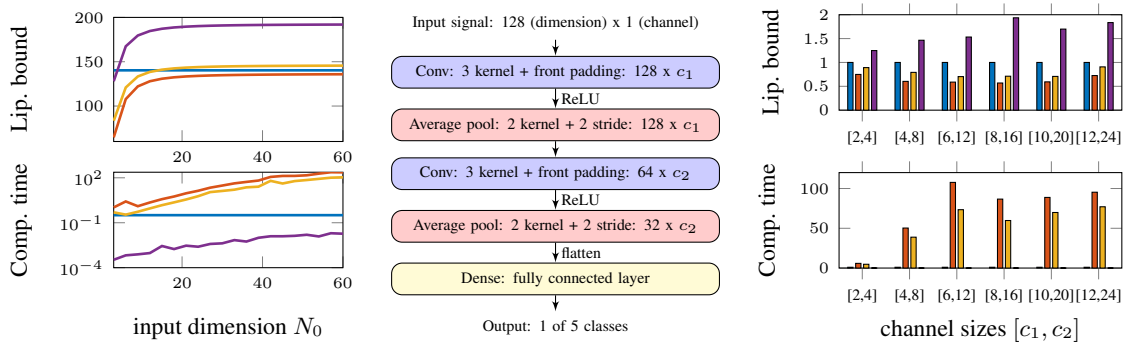

In the following, we compare our approach for Lipschitz constant estimation to the approaches LipSDP-Neuron and LipSDP-Layer suggested by \cite{fazlyab2019efficient} and to the trivial bound obtained from multiplication of spectral norms of weight matrices \citep{szegedy2013intriguing}
\subsection{Fully convolutional neural networks}
We set up a fully convolutional neural network with three convolutional layers, no pooling layers, and no fully connected layers with channel dimensions $[1,3,5,10]$ and kernel sizes $[3,3,3]$ and randomly generate a set of convolution kernels for the CNN. In Fig.~\ref{fig:conv} on the left, we see the Lipschitz bounds for different input signal dimensions $N_0$ and the corresponding computation times. We observe that for small input dimensions the Lipschitz bounds LipSDP-Neuron and LipSDP-Layer are superior to our approach. However, for larger input dimensions the computation time increases drastically and, in this example, our method outperforms LipSDP-Layer for input dimensions $N_0>15$.

\subsection{Deep CNNs}
For the CNN shown in Fig.~\ref{fig:conv} in the middle, we generate six different CNNs with different channel sizes $c_1$ and $c_2$ and compare the Lipschitz bounds and the computation times of our method, LipSDP-Neuron, LipSDP-Layer, and the spectral norm product. While the computational overhead of LipSDP-Neuron/Layer is huge the Lipschitz bound is comparable to the one obtained using our approach which is clearly more accurate than the spectral norm bound. We note that, unlike LipSDP-Neuron/-Layer, our approach scales to significantly larger CNNs than shown in Fig. \ref{fig:conv}.

\section{Conclusion and future work}\label{sec:conclusion}
In this paper, we proposed an efficient SDP to determine accurate upper bounds on the Lipschitz constant for 1D CNNs based on dissipativity theory. To do so, we derived incremental QCs for pooling layers and activation functions and further chose a compact state space representation for convolutional layers. A future direction is to directly exploit the structure of a 2D convolution for improved scalability.

\acks{The authors thank Ruigang Wang and Ian Manchester for helpful discussions on the topic. This work was funded by Deutsche Forschungsgemeinschaft (DFG, German Research Foundation) under Germany's Excellence Strategy - EXC 2075 - 390740016 and under grant 468094890. We acknowledge the support by the Stuttgart Center for Simulation Science (SimTech). The authors thank the International Max Planck Research School for Intelligent Systems (IMPRS-IS) for supporting Patricia Pauli and Dennis Gramlich.}

\bibliography{references}

\begin{thebibliography}{18}
\providecommand{\natexlab}[1]{#1}
\providecommand{\url}[1]{\texttt{#1}}
\expandafter\ifx\csname urlstyle\endcsname\relax
  \providecommand{\doi}[1]{doi: #1}\else
  \providecommand{\doi}{doi: \begingroup \urlstyle{rm}\Url}\fi

\bibitem[Aquino et~al.(2022)Aquino, Rahnama, Seiler, Lin, and
  Gupta]{aquino2022robustness}
Bernardo Aquino, Arash Rahnama, Peter Seiler, Lizhen Lin, and Vijay Gupta.
\newblock Robustness against adversarial attacks in neural networks using
  incremental dissipativity.
\newblock \emph{IEEE Control Systems Letters}, 6:\penalty0 2341--2346, 2022.

\bibitem[Arcak et~al.(2016)Arcak, Meissen, and Packard]{arcak2016networks}
Murat Arcak, Chris Meissen, and Andrew Packard.
\newblock \emph{Networks of dissipative systems: compositional certification of
  stability, performance, and safety}.
\newblock Springer, 2016.

\bibitem[Combettes and Pesquet(2020)]{combettes2020lipschitz}
Patrick~L Combettes and Jean-Christophe Pesquet.
\newblock Lipschitz certificates for layered network structures driven by
  averaged activation operators.
\newblock \emph{SIAM Journal on Mathematics of Data Science}, 2\penalty0
  (2):\penalty0 529--557, 2020.

\bibitem[Fazlyab et~al.(2019)Fazlyab, Robey, Hassani, Morari, and
  Pappas]{fazlyab2019efficient}
Mahyar Fazlyab, Alexander Robey, Hamed Hassani, Manfred Morari, and George
  Pappas.
\newblock Efficient and accurate estimation of {L}ipschitz constants for deep
  neural networks.
\newblock \emph{Advances in Neural Information Processing Systems}, 32, 2019.

\bibitem[Goodwin and Sin(2014)]{goodwin2014adaptive}
Graham~C Goodwin and Kwai~Sang Sin.
\newblock \emph{Adaptive filtering prediction and control}.
\newblock Courier Corporation, 2014.

\bibitem[Gramlich et~al.(2023)Gramlich, Pauli, Scherer, Allg\"ower, and
  Ebenbauer]{gramlich2022convolutional}
Dennis Gramlich, Patricia Pauli, Carsten~W. Scherer, Frank Allg\"ower, and
  Christian Ebenbauer.
\newblock Convolutional neural networks as {2-D} systems.
\newblock \emph{arXiv preprint arXiv:2303.03042}, 2023.

\bibitem[Hill and Moylan(1976)]{hill1976stability}
David Hill and Peter Moylan.
\newblock The stability of nonlinear dissipative systems.
\newblock \emph{IEEE Transactions on Automatic Control}, 21\penalty0
  (5):\penalty0 708--711, 1976.
\newblock \doi{10.1109/TAC.1976.1101352}.

\bibitem[Kanellopoulos et~al.(2019)Kanellopoulos, Vamvoudakis, and
  Gupta]{kanellopoulos2019decentralized}
Aris Kanellopoulos, Kyriakos~G Vamvoudakis, and Vijay Gupta.
\newblock Decentralized verification for dissipativity of cascade
  interconnected systems.
\newblock In \emph{58th Conference on Decision and Control (CDC)}, pages
  3629--3634. IEEE, 2019.

\bibitem[Kiranyaz et~al.(2021)Kiranyaz, Avci, Abdeljaber, Ince, Gabbouj, and
  Inman]{kiranyaz20211d}
Serkan Kiranyaz, Onur Avci, Osama Abdeljaber, Turker Ince, Moncef Gabbouj, and
  Daniel~J Inman.
\newblock 1d convolutional neural networks and applications: A survey.
\newblock \emph{Mechanical systems and signal processing}, 151:\penalty0
  107398, 2021.

\bibitem[Kulkarni and Safonov(2002)]{kulkarni2002incremental}
Vishwesh~V Kulkarni and Michael~G Safonov.
\newblock Incremental positivity nonpreservation by stability multipliers.
\newblock \emph{IEEE Transactions on Automatic Control}, 47\penalty0
  (1):\penalty0 173--177, 2002.
\newblock \doi{10.1109/9.981740}.

\bibitem[Latorre et~al.(2020)Latorre, Rolland, and
  Cevher]{latorre2020lipschitz}
Fabian Latorre, Paul Rolland, and Volkan Cevher.
\newblock Lipschitz constant estimation of neural networks via sparse
  polynomial optimization.
\newblock \emph{arXiv preprint arXiv:2004.08688}, 2020.

\bibitem[Newton and Papachristodoulou(2021)]{newton2021exploiting}
Matthew Newton and Antonis Papachristodoulou.
\newblock Exploiting sparsity for neural network verification.
\newblock In \emph{Learning for Dynamics and Control}, pages 715--727. PMLR,
  2021.

\bibitem[Oord et~al.(2016)Oord, Dieleman, Zen, Simonyan, Vinyals, Graves,
  Kalchbrenner, Senior, and Kavukcuoglu]{oord2016wavenet}
Aaron van~den Oord, Sander Dieleman, Heiga Zen, Karen Simonyan, Oriol Vinyals,
  Alex Graves, Nal Kalchbrenner, Andrew Senior, and Koray Kavukcuoglu.
\newblock Wavenet: A generative model for raw audio.
\newblock \emph{arXiv preprint arXiv:1609.03499}, 2016.

\bibitem[Pauli et~al.(2022{\natexlab{a}})Pauli, Funcke, Gramlich, Msalmi, and
  Allg{\"o}wer]{pauli2022neural}
Patricia Pauli, Niklas Funcke, Dennis Gramlich, Mohamed~Amine Msalmi, and Frank
  Allg{\"o}wer.
\newblock Neural network training under semidefinite constraints.
\newblock \emph{arXiv preprint arXiv:2201.00632}, 2022{\natexlab{a}}.

\bibitem[Pauli et~al.(2022{\natexlab{b}})Pauli, Koch, Berberich, Kohler, and
  Allg{\"o}wer]{pauli2021training}
Patricia Pauli, Anne Koch, Julian Berberich, Paul Kohler, and Frank
  Allg{\"o}wer.
\newblock Training robust neural networks using {L}ipschitz bounds.
\newblock \emph{IEEE Control Systems Letters}, 6:\penalty0 121--126,
  2022{\natexlab{b}}.

\bibitem[Scherer and Weiland(2000)]{scherer2000linear}
Carsten Scherer and Siep Weiland.
\newblock Linear matrix inequalities in control.
\newblock \emph{Lecture Notes, Dutch Institute for Systems and Control, Delft,
  The Netherlands}, 3\penalty0 (2), 2000.

\bibitem[Szegedy et~al.(2013)Szegedy, Zaremba, Sutskever, Bruna, Erhan,
  Goodfellow, and Fergus]{szegedy2013intriguing}
Christian Szegedy, Wojciech Zaremba, Ilya Sutskever, Joan Bruna, Dumitru Erhan,
  Ian Goodfellow, and Rob Fergus.
\newblock Intriguing properties of neural networks.
\newblock \emph{arXiv preprint arXiv:1312.6199}, 2013.

\bibitem[Xue et~al.(2022)Xue, Lindemann, Robey, Hassani, Pappas, and
  Alur]{xue2022chordal}
Anton Xue, Lars Lindemann, Alexander Robey, Hamed Hassani, George~J Pappas, and
  Rajeev Alur.
\newblock Chordal sparsity for {L}ipschitz constant estimation of deep neural
  networks.
\newblock \emph{arXiv preprint arXiv:2204.00846}, 2022.

\end{thebibliography}

\section*{Appendix}

\subsection*{Appendix 1: Proof of Lemma \ref{lem:diss_CNN}}\label{app:1}
\begin{proof}
We left/right multiply \eqref{eq:cert_CNN} with
$\begin{bmatrix}
    {\Delta_x^i}^\top & {\Delta_w^{i-1}}^\top & {\Delta_w^i}^\top   
\end{bmatrix}$ and its transpose, respectively, yielding
    \begin{align*}
        {\Delta_x^i}^\top P_i \Delta_x^i - {\Delta_+^i}^\top P_i \Delta_+^i 
        +{\Delta_w^i}^\top \widetilde{Q}_i\Delta_w^i
        +2{\Delta_w^{i-1}}^\top \widetilde{S}_i \Delta_w^i
        +{\Delta_w^{i-1}}^\top \widetilde{R}_i{\Delta_w^{i-1}}
        \\
        \underbrace{-2{\Delta_w^i}^\top\Lambda_i \Delta_y^i+2{\Delta_w^i}^\top\Lambda_i \Delta_w^i}_{\leq 0}
        \geq 0,
    \end{align*}
where we introduced $\Delta_x^i = x_{a,{k}}^i-x_{b,{k}}^i$ and $\Delta_+^i = x_{a,{k+1}}^i-x_{b,{k+1}}^i$, respectively. Using Lemma~\ref{lem:slope}, we can drop the term that is associated with slope restriction of the activation function as indicated. We next sum up from $k=0$ to $N_i-1$ and obtain
    \begin{align*}
        \left(x^i_{a,0}-x^i_{b,0}\right)^\top P_i \left(x^i_{a,0}-x^i_{b,0}\right) \underbrace{-\left(x^i_{a,{N_i}}-x^i_{b,{N_i}}\right)^\top P_i \left(x^i_{a,{N_i}}-x^i_{b,{N_i}}\right)}_{\leq0}
        +s(\Delta w^{i-1},\Delta w^i)\geq 0.
    \end{align*}
As $P_i$ is positive definite, we can drop the second term. Furthermore, zero padding at the beginning of the signal corresponds to setting $x_0=0$ such that the first term can also be dropped, finally yielding \eqref{eq:diss_CNN}.
\end{proof}

\subsection*{Appendix 2: Proof of Proposition \ref{prop:Lip_av_pooling}}\label{app:2}
\begin{proof}
Without loss of generality, we consider the case $k=1$ in this proof. We rewrite both sides of the inequality in terms of the scalar differences $\Delta w_{jl}^{i-1}=w_{a,jl}^{i-1}-w_{b,jl}^{i-1}$:
\begin{align*}
    \left\vert w_{a,j1}^i-w_{b,j1}^i\right\vert^2=\left\vert\frac{1}{\ell_i}\sum_{l=1}^{\ell_i} w_{a,jl}^{i-1}-\frac{1}{\ell_i}\sum_{l=1}^{\ell_i} w_{b,jl}^{i-1}\right\vert^2
    =\frac{1}{\ell_i^2}\left(\Delta w_{j1}^{i-1}+\dots+\Delta w_{j\ell_i}^{i-1}\right)^2\\
    =\frac{1}{\ell_i^2}\left(\left(\Delta {w_{j1}^{i-1}}\right)^2+\dots+\left(\Delta {w_{j\ell_i}^{i-1}}\right)^2+2\Delta w_{j1}^{i-1}\Delta w_{j2}^{i-1}+\dots+2\Delta w_{j{\ell_i-1}}^{i-1}\Delta w_{j\ell_i}^{i-1}\right)\\
    \leq \frac{1}{\ell_i}\left(\left(\Delta {w_{j1}^{i-1}}\right)^2+\dots+\left(\Delta {w_{j\ell_i}^{i-1}}\right)^2\right)=\frac{1}{\ell_i}\left\Vert v^{i}_{a,j1}-v^{i}_{b,j1}\right\Vert^2.
\end{align*}
Rearranging the resulting inequality results in
\begin{align*}
    0 \leq
    {\Delta v^{i}_{j1}}^\top
    \begin{bmatrix}
        \frac{1}{\ell_i}-\frac{1}{\ell_i^2} & -\frac{1}{\ell_i^2} & \dots & -\frac{1}{\ell_i^2} \\
        -\frac{1}{\ell_i^2} & \frac{1}{\ell_i}-\frac{1}{\ell_i^2} & \ddots & \vdots \\
        \vdots & \ddots &  \ddots & -\frac{1}{\ell_i^2}\\
        -\frac{1}{\ell_i^2} & \dots & -\frac{1}{\ell_i^2} & \frac{1}{\ell_i}-\frac{1}{\ell_i^2}
    \end{bmatrix} \Delta v^{i}_{j1}=\Delta{v^{i}_{j1}}^\top M \Delta v_{j1}^{i},
\end{align*}
wherein the matrix $M$ is diagonally dominant for all $\ell_i\in\bbN_+$ and consequently a positive semidefinite matrix.
\end{proof}

\subsection*{Appendix 3: Proof of Corollary \ref{cor:reduced_vars}}\label{app:3}
\begin{proof}
We show the combination of Theorem \ref{thm:main} using Lemma \ref{lem:diss_CNN}, Lemma \ref{lem:diss_FC}, Lemma \ref{lem:av_pooling}, and Lemma \ref{lem:max_pooling}  together with the parameterization obtained from setting $H=0$ in \eqref{eq:diss_coupling} yields Corollary \ref{cor:reduced_vars}.
First, we note that \eqref{eq:diss_coupling} holds for $Q_i\in\bbS^{n_{i}}$, $R_i\in\bbS^{n_{i-1}}$, $S_i\in\bbS^{n_{i-1}\times n_{i}}$ replaced by $\widetilde{Q}_i\in\bbS^{c_{i-1}}$, $\widetilde{R}_i\in\bbS^{c_{i-1}}$, $\widetilde{S}_i\in\bbS^{c_{i-1}\times c_{i}}$ for all $i=1,\dots,p-1$ convolutional and pooling layers, in addition leaving $Q_p$ as is and replacing $S_p$ by $\begin{bmatrix}\widetilde{S}_p & \dots & \widetilde{S}_p\end{bmatrix}$ and replacing $R_p$ by $\widetilde{R}_p$ in the $p$-th layer to account for the flattening operation.

Next, we insert $\widetilde{Q}_i=-T_i$, $\widetilde{S}_i=0$, $\widetilde{R}_i=\mu_i^2T_i$ in all $i\in\calI_P^\mathrm{av}$ average pooling layers, yielding
\begin{equation}\label{eq:diss_coupling2}
\begin{bmatrix}
        \ddots & \ddots\\
        \ddots & -\widetilde{Q}_{i-2}-R_{i-1} & -\widetilde{S}_{i-1}^\top &&&&\\
        &-\widetilde{S}_{i-1} & -\widetilde{Q}_{i-1}-\mu_{i}^2T_{i}  &&&&\\
        &&& T_{i}-\widetilde{R}_{i+1} & -\widetilde{S}_{i+1}^\top &\\
        &&& -\widetilde{S}_{i+1} & \widetilde{Q}_{i+1}-\widetilde{R}_{i+2} & \ddots\\
        &&&& \ddots & \ddots
    \end{bmatrix}\succeq 0.
\end{equation}
We then left and right multiply the resulting matrix \eqref{eq:diss_coupling2} with $\blkdiag(1/\mu_i^2 I_{m^i_1}, I_{m^i_2})$, where $m^i_1={\sum_{j=0}^i n_j}$ and $m^i_2=\sum_{j=i+1}^{l} n_j$. For layers $j=1,\dots,i-1$, we point out that $s(\Delta w^{j-1}, \Delta w^j)\geq 0$ holds by \eqref{eq:cert_CNN}. Instead, we can verify $\mu_{i}^2 s(w^{i-1},w^i)\geq 0$ as $\mu_{i}^2\geq 0$, for which we redefine $\widetilde{Q}_i\coloneqq \mu_{i}^2 \widetilde{Q}_i$, $\widetilde{S}_i\coloneqq \mu_{i}^2 \widetilde{S}_i$, $\widetilde{R}_i\coloneqq \mu_{i}^2 \widetilde{R}_i$ by their rescaled versions. We perform this step for all average pooling layers. The first term in the upper left block then is $\tilde{\gamma}^2 I$ with $\tilde{\gamma}^2 = \gamma^2 / \prod_{s\in\calI_{P}^\text{av}} \mu_{s}^2$.

Next, we insert $\widetilde{Q}_i=-\Sigma_i$, $\widetilde{S}_i=0$, $\widetilde{R}_i=\Sigma_i$ in all $i\in\calI_P^\mathrm{max}$ maximum pooling layers, yielding
\begin{equation}\label{eq:diss_coupling3}
\begin{bmatrix}
        \ddots & \ddots\\
        \ddots & -\widetilde{Q}_{i-2}-R_{i-1} & -\widetilde{S}_{i-1}^\top &&&&\\
        &-\widetilde{S}_{i-1} & -\widetilde{Q}_{i-1}-\Sigma_{i}  &&&&\\
        &&& \Sigma_{i}-\widetilde{R}_{i+1} & -\widetilde{S}_{i+1}^\top &\\
        &&& -\widetilde{S}_{i+1} & \widetilde{Q}_{i+1}-\widetilde{R}_{i+2} & \ddots\\
        &&&& \ddots & \ddots
    \end{bmatrix}\succeq 0.
\end{equation}
We set the resulting coupling matrix \eqref{eq:diss_coupling3} to zero, which results in $Q_{i-1}=-\Sigma_i=-R_{i+1}$ being diagonal matrices. We replace $S_i=0$ and $R_i=-Q_{i-1}$ in \eqref{eq:cert_CNN} and \eqref{eq:cert_FC} to obtain \eqref{eq:cert_reduced_vars_CNN} and the first LMI stated in \eqref{eq:cert_reduced_vars_FC}. Finally, we consider the last linear layer. It is not concatenated with an activation function, such that $\Lambda_{l}=I$ and further, we set $Q_{l}=I$ resulting from $H=0$, which yields the second LMI in \eqref{eq:cert_FC}.
\end{proof}

\end{document}